\documentclass{comsoc2020}

\usepackage{amsmath}
\usepackage{array}
\usepackage{amssymb}
\usepackage{amsthm}
\usepackage{cleveref}
\usepackage{thmtools}
\usepackage{booktabs}
\usepackage{graphicx}
\usepackage{tikz}
\usepackage{comment}
\usepackage{pgfplots}
\usepackage{array}
\usepackage{subcaption}
\usepackage{paralist}
\usepackage{natbib}

\usepackage[ruled,vlined,linesnumbered]{algorithm2e}

\usepackage{bbm}
\usepackage{tikz}
\usepackage{wrapfig}
\usetikzlibrary{backgrounds}
\usetikzlibrary{positioning}
\usetikzlibrary{shapes}

\def\compileFigures{0}
\newcommand{\filename}{main}
\newcounter{figuerNumber}
\if\compileFigures1
\usetikzlibrary{external}
\tikzexternalize[prefix=fig/] 
\fi

\usepackage{pgfplots}
\usepackage{pgfplotstable}
\pgfplotsset{compat=1.16}
\usepgfplotslibrary{colormaps}
\pgfplotsset{colormap/gray}
\usepackage{ifthen}


\tikzset{cellColor/.style={color of colormap={#1},fill=.}}
\newcommand{\colorcell}[3][black]{\begin{tikzpicture}[inner sep=0pt, baseline=0, show background rectangle]
      \pgfmathparse{int((#2+#3)/(2*#3)*1000)}\let\bkvalue\pgfmathresult
      \tikzset{background rectangle/.style={cellColor=\bkvalue}};
        \ifnum \bkvalue < 750
            \ifnum \bkvalue > 250
                \draw (0,0) node[text=black,minimum width=19pt] {#2};
            \else
                \draw (0,0) node[text=white,minimum width=19pt] {#2};
            \fi
        \else
            \draw (0,0) node[text=white,minimum width=19pt] {#2};
        \fi
      
    \end{tikzpicture} }
    
\newcommand{\bc}[1]{\begin{tikzpicture}[inner sep=0pt, baseline=0, show background rectangle]
      \pgfmathparse{int(1000)}\let\bkvalue\pgfmathresult
      \tikzset{background rectangle/.style={cellColor=\bkvalue}};
      \draw (0,0) node[text=white,minimum width=19pt] {#1};
    \end{tikzpicture} }
\newcommand{\whitecell}[1]{\begin{tikzpicture}[inner sep=0pt, baseline=0,background rectangle/.style={fill=white}, show background rectangle]
      \draw (0,0) node[minimum width=19pt] {#1};
    \end{tikzpicture} }
\newcommand{\wc}[1]{\whitecell{#1}}
\newcommand{\CGscale}{0.015}
\newcommand{\MSscale}{0.161}

\newcommand{\R}{\mathbb{R}}

\newcommand{\Age}{\texttt{Age}}
\newcommand{\CG}{\texttt{Capital Gain}}
\newcommand{\MS}{\texttt{Marital Status}}
\newcommand{\EL}{\texttt{Education Level}}
\newcommand{\EN}{\texttt{Education Number}}

\newcommand{\poi}{\vec x}

\newcommand{ \BII}{\texttt{BII }}
\newcommand{ \SII}{\texttt{SII }}
\newcommand{ \QII}{\texttt{QII }}
\newcommand{ \LIME}{\texttt{LIME }}

\DeclareMathOperator*{\argmin}{arg\,min}
\usepackage{amsthm}
\newtheorem{example}{Example}
\newtheorem{theorem}{Theorem}

\usepackage{blindtext}

\usepackage{etoolbox}

\title{High Dimensional Model Explanations: An Axiomatic Approach}
\author{Neel Patel, Martin Strobel and Yair Zick}

\begin{document}

\begin{abstract}
Complex black-box machine learning models are regularly used in critical decision-making domains.  
This has given rise to several calls for algorithmic explainability. Many explanation algorithms proposed in literature assign importance to each feature individually. However, such explanations fail to capture the joint effects of sets of features. Indeed, few works so far formally analyze \emph{high dimensional model explanations}. In this paper, we propose a novel high dimension model explanation method that captures the joint effect of feature subsets.
 
We propose a new axiomatization for a generalization of the Banzhaf
index; our method can also be thought of as an
approximation of a black-box model by a higher-order polynomial. In other words, this work justifies the use of the generalized Banzhaf index as a model explanation by showing that it uniquely satisfies a set of natural desiderata and that it is the optimal local approximation of a black-box model. 
 
Our empirical evaluation of our measure highlights how it manages to capture desirable behavior, whereas other measures that do not satisfy our axioms behave in an unpredictable manner.  
\end{abstract}

\section{Introduction}\label{sec:intro}
Machine learning models are applied in a variety of high-stakes domains, such as healthcare, insurance, credit decisions and more. 
In order to offer high prediction accuracy over high-dimensional data, one must adopt increasingly complex models. Complex models are, however, more difficult to understand.
Explainability has been recently identified as an important design criterion, and has received widespread attention within the ML community; of particular note is research into generating \emph{model explanations}. 
Broadly speaking, model explanations are based on a labeled dataset as well as other potential sources of information. Most model explanation techniques focus on feature-based explanations: for each feature $i$, the model explanation outputs a value $\phi_i$ which signifies the importance of the $i$-th feature in determining model predictions.  
 
The value $\phi_i$ can be thought of as a score --- `Alice's high income is highly indicative of her receiving a loan' --- or as a recourse --- `had Alice's income been lower by \$10,000/year her loan would have been rejected'.
Either way, the basic premise of attribute-based model explanations is to explain complex model decisions via a list of $n$ numerical values, where $n$ is the number of data features. 
Crucially, this approach fails to capture \emph{feature interactions}. 
Features are often strongly interdependent, especially in \emph{complex ML models}.  For example: consider a black-box model that predicts sentiments associated with a paragraph of text. In such texts; there can be a high negative interaction effect between ``not'' and ``bad'', which attribute-based model explanations will fail to capture: assigning influence to ``bad'' and ``not'' individually can be misleading. 
Consider the following review:

\begin{quote}
    \texttt{It isn't the greatest scifi flick I've every seen, but it is not a bad movie}
\end{quote}
The above review is classified as a positive sentence. \LIME \cite{ribeiro2016should} --- a type of feature-based explanation --- assigns high negative influence to the words ``bad'' and ``not''. However, reviews omitting the word ``bad'' or ``not'' will be classified as negative. Explanation methods that indicate feature interactions, rather than just standalone scores, offer a much clearer picture of how the model makes its decisions. 
 
The current model explanation landscape offers a wide variety of model explanation methods; how should we pick the right one? In this paper, we propose an \emph{axiomatic approach}: rather than starting with a candidate solution and then offering post-hoc justifications for using it, start with the properties one would like to have in a high-dimensional model explanation, and derive a solution that satisfies these properties. This is the approach we take in this paper.

We axiomatize a high dimensional model explanation method, called the Banzhaf Interaction Index, or BII. This method faithfully captures how feature interactions influence model decision-making. Not only does our proposed model explanation satisfy a set of desirable properties, it is the \emph{only} model explanation that satisfies all these properties. 

\subsection{Our Contributions}
In this work, we propose a method for high-dimensional explanations for black-box models. Our main goal is to axiomatically capture higher-order feature interaction. 
Our main contribution is twofold: first, we propose a high-dimensional black-box model explanation method, axiomatically capturing higher-order feature interaction. Our characterization uses properties that are very natural in the ML context, such as feature symmetry and effect monotonicity (Section \ref{sec:axiom}). we obtain a new axiomatization of the \textit{Banzhaf interaction index} which uniquely satisfies symmetry, limit condition, general-2-efficiency, and a newly proposed axiom (in the context of Banzhaf indices): monotonicity. Monotonicity is a rather general property which essentially means that the model explanation should change in a manner faithful to the underlying data. This is very fundamental property for an interaction measure: which states that the net contribution of the subset of features for the machine learning model $f$ is more than that for the model $g$; then the interaction measure for those features for model $f$ should be more than the interaction measure for those features for model $g$.

Second, we extend the idea of feature-based model explanations, which can be thought of as a local linear approximations of black-box models, to higher-order polynomial approximations. Especially, we show that our proposed measure can be obtained by approximating the black-box model by a higher-order polynomial (Section~\ref{sec:geo}).  

We evaluate our approach on real datasets. In particular, our experiments show that feature-based model explanations may fail to capture feature synergy; moreover, we show that other model explanations which fail to satisfy our axioms tend to behave in an undesirable manner (Section \ref{sec:experiments}). 

\subsection{Related Work}
While some model explanations provide record-based explanations \cite{koh2017understanding}, or generate explanations from source code \cite{datta2017use}, the bulk of the literature on the topic focuses on feature-based model explanations \cite{ancona2017unified,baehrens2010explain,datta2015influence,datta2016algorithmic,sliwinski2019axiomatic,sundararajan2017axiomatic}. \citet{ancona2018towards} offer an overview of feature based model explanations for deep neural networks. 
The connection to \emph{cooperative game theory} has been widely discussed and exploited in order to generate model explanations \cite{ancona2017unified,datta2015influence, datta2016algorithmic}, with a particular focus on the Shapley value and its variants \cite{shapley1953value}.   
\paragraph{Interaction Indices for Cooperative Games:} Two widely accepted measures of marginal influence from cooperative game theory the Shapley value \cite{shapley1953value} and the Banzhaf value \cite{banzhaf1964weighted}, are uniquely derived from a set of natural axioms (see also \cite{young1985monotonic}).
These measures do not capture player interactions; rather, they assign weights to individual players. \citet{owen1972multilinear} proposes the first higher-order solution for a cooperative game for pairwise players. 
\citet{grabisch1999axiomatic} extend it to interaction between arbitrary subsets, and axiomatically derive the Shapley and the Banzhaf interaction indices. 
In a recent paper, \citet{agarwal2019new} propose a new axiomatization for interaction among players which is inspired by the Taylor approximation of a Boolean function.  
\paragraph{Feature Interactions:} Feature interaction has been studied by different communities. Statistics offers classic ANOVA based feature interaction analysis \cite{fisher1992statistical,gelman2005analysis}. 
Some recent work in the deep learning literature discusses feature interaction: \citet{tsang2018neural} learns interactions by inspecting the inter-layer weight matrices of a neural network, \citet{tsang2017detecting} construct a generalized
additive model that contains feature interaction information. 
In another line of work, \citet{greenside2018discovering, cui2019recovering} compute the interaction among features by computing the (expected) Hessian; this can be thought as an extension of gradient-based influence measures for neural networks \cite{ancona2018towards}. \citet{datta2016algorithmic} also propose an influence measure for a set of features called Set-QII. It essentially measures the change in model output when we randomly change a fixed set of features. 
\citet{lundberg2018consistent} propose the Shapley interaction index, specifically for tree-based models. 
\section{Preliminaries}\label{sec:prelim}
We denote sets with capital letters $A,B,\dots$ and use lowercase letters for functions, scalars, and features. 
To minimize notation clutter, we try to omit braces for singletons, pairs etc., e.g. we write $f(i) , S \cup i$ instead of $f(\{i\}) , S \cup \{i\}$ and   $S \cup ij $ instead of $S \cup \{i,j\} $.
 
Let $N=\{1,\dots,n\}$ be the set of \emph{features}. 
A \emph{black-box model} is a function mapping a set of $n$-dimensional input vectors $\mathcal X \subseteq \R^n$ to $\R$. 
Our objective is to generate a \emph{model explanation} for a given \emph{point of interest} (POI) $\poi \in \R^n$; 
this explanation should (ideally) offer stakeholders some insight into the underlying decision-making process that ultimately resulted in the outcome they receive. 
In this work, we are interested in measuring the extent to which features, and \emph{their high-order interactions} affect model decisions.
In order to measure feature interaction effects, we adopt the \emph{baseline comparison} approach \cite{lundberg2017unified,sundararajan2019many}; in other words, we assume the existence of a \emph{baseline vector} $\vec x'$, to which we compare an input vector $\poi$, in order to generate a model explanation. 
For example, in the automatic loan acceptance/rejection domain $\vec x'$ could correspond to an all-zero vector (e.g. measuring the effect of an applicant having no money in their bank account, as opposed to the true amount they have), or a vector of mean values (e.g. comparing the applicant's true income to the average population income). 
 
In order to generate model explanations, we need to formally reason about the effect of changing features in the POI $\poi$ to their baseline values. Generally speaking, changing a single feature may have no significant effect on the model prediction. For example, if $f(\mathit{income}=20k,\mathit{debt}=90k) = 1$, it may well be the case that changing either the applicant's low income ($20k$) or their high debt ($90k$) would not result in them receiving the loan, however, it is unreasonable to claim that neither had an effect on the outcome. 
To formally reason about the joint effect of features, we define a function measuring their value as a set. Given a set of features $S\subseteq N$, a point of interest $\poi$, a baseline $\vec x'$ and a model $f$, we define a set function as 
\begin{align}
v(S,\poi, \vec x',f) = f(\poi_S,\vec{x}'_{N\setminus S}) - f(\vec x').\label{eqn:importance-game}
\end{align}
In other words, the value we assign to a set of features $S$ is the extent to which they cause the model prediction to deviate from the baseline prediction; as a sanity check, we note that $v(\emptyset,\poi,\vec x',f) = 0$, and $v(N,\poi,\vec x',f) = f(\vec x) - f(\vec x')$. 
This formulation induces a \emph{cooperative game}, where features correspond to players. We refer to the game defined in \eqref{eqn:importance-game} as the \emph{feature effect game}. 
We omit $\vec x$, $\vec x'$ and $f$ (they will normally be fixed), focusing solely on the set of features $S$. 
 
We often replace sets of features with a single feature demarcating the entire set: given a set $T\subseteq N$, $[T]$ denotes a \emph{single} feature corresponding to the set. 
The \emph{reduced game} w.r.t. the nonempty $T\subseteq N$ is defined on the features $N \setminus T\cup [T]$ with the characteristic function $v_{[T]}(\cdot,f):2^{N\setminus T \cup\{[T]\}} \to \R$:
\begin{align*}
v_{[T]}(S) =& \begin{cases}
   v(S\cup T)    & [T] \in S   \\
   v(S)           & \text{otherwise} 
\end{cases}
\end{align*}
Our objective is to generate \emph{high dimensional model explanations}, i.e. functions that assign a value to every subset of features $S \subseteq N$. 
To do so, we define a \emph{feature interaction index} $S\subseteq N$ for $v$ as $I^{v}(S)$. 
In other words, under the game $v$ --- namely the game defined in \eqref{eqn:importance-game} --- $I^v(S)$ should roughly capture the overall effect that the set of features $S$ has on the value of $v$. 
Going back to the feature effect game defined in \eqref{eqn:importance-game}, $I^v(S)$ should measure the degree to which switching the value of features in $S$ back to their baseline values affects the prediction for $\vec x$. 
A key idea in our analysis is \emph{marginal effect}: consider a single feature $i \in N$. Its marginal effect on a set $T \subseteq N \setminus \{i\}$ equals $v(T\cup i) - v(T)$: the extent to which the value of $v(T)$ changes when $i$ joins $T$. 
The marginal effect of $i$ on $T$ is denoted $m_i(T)$. 
In the context of the feature importance game, $m_i(T,v)$ is the marginal effect of knowing the feature $i$, given that the values of features in the set $T$ are known. This is similar to other definitions considered in the literature \cite{datta2016algorithmic,lundberg2017unified}.
When considering a pair of features $i,j \in N$, how would one define their marginal effect on a coalition $T \subseteq N$? 
One very natural definition is to offset the effect of adding both features to $T$ by the marginal effects of adding $i$ and $j$ separately, i.e.
\begin{align*}
    m_{ij}(T,v) =& \left(v(T\cup ij) - v(T)\right) -\left(v(T\cup i) -v(T)\right) - \left(v(T\cup j) - v(T)\right)\\
                =& v(T\cup ij) - v(T\cup i) - v(T\cup j) + v(T)
\end{align*}
We can define the marginal contribution of a general $S\subseteq N$, in a similar manner. Let $m_S(T,v)$ be:
\[
    m_S(T,v) = \sum_{L\subseteq S} (-1)^{|S|-|L|} v(T \cup L).
\]
This is also known as the \emph{$S$ discrete derivative of $v$} at $T$ (echoing the inclusion-exclusion principle).
We note that when $T = \emptyset$, $m_S(\emptyset,v)$ is the \emph{Harsanyi dividend} of $S$, a well-known measure of the synergy (or surplus) generated by $S$ \cite{coopbook}. For $T \ne \emptyset$, $m_S(T,v)$ can be thought of as the added value of having the coalition $S$ form, given that the players in $T$ have already committed to joining. More generally, $m_S(T,v)$ represents the marginal interaction between features in $S$ within the set of features $T\cup S$. 

\section{Axiomatic Characterization of Good Model Explanations}\label{sec:axiom}
As previously discussed, our objective is to identify \emph{high-quality} model explanations. When pursuing quality metrics for a model explanation, one can take one of two approaches: either show that the model explanation is the optimal solution to some target (e.g. minimizes some loss function) \cite{ribeiro2016should,sliwinski2019axiomatic}, or that it satisfies a set of desirable properties \cite{datta2015influence,datta2016algorithmic,sliwinski2019axiomatic}. 
We take both approaches in this work, resulting in a \emph{unique} and \emph{optimal} high-dimensional explanation; this is the \emph{Banzhaf interaction index} (BII) \cite{grabisch1999axiomatic}. 
Given a cooperative game $v$, the Banzhaf interaction index for a subset $S \subseteq N$ is
\begin{align}
    I^v_{\BII}(S) &= \frac{1}{2^{n-|S|}} \sum_{T\subseteq N \setminus S}\sum_{L \subseteq S} (-1)^{|S|-|L|} v({L \cup T}) \notag \\
    &=\frac{1}{2^{n - |S|}} \sum_{T \subseteq N\setminus S}m_S(T,v)\label{eqn:banzhaf_interaction_index}
\end{align}
In words, $I_{\BII}^v(S)$ equals $S$'s expected marginal contribution to a set $T \subseteq N \setminus S$, sampled uniformly at random. 
\citet{grabisch1999axiomatic} show that \BII uniquely satisfies \emph{Linearity}, \emph{Symmetry}, \emph{Dummy}, the \emph{Recursive Property}, \emph{Generalized 2-Efficiency}, and the \emph{Limit Condition}. 
Theorem \ref{thm:main} proposes a `leaner' axiomatization of \BII, which, as we argue, is more sensible in the model explanation setting.
We show that \BII is the unique measure which satisfies four natural axioms: \emph{Symmetry}, \emph{Generalized 2-Efficiency}, the \emph{Limit Condition}, and \emph{Monotonicity}. 
The first three axioms are fairly standard assumptions in identifying `good' solutions, generalized to interaction indices by \cite{grabisch1999axiomatic}. 
\begin{description}
\item[Symmetry (S):] for any permutation $\pi$ over $N$ we have that $I^v(S) = I^{\pi v}(\pi S)$. Here, $\pi S$ equals $\{\pi(i): i \in S\}$, and $\pi v$ is the game where the value of a coalition $T$ equals $v(\pi^{-1}T)$.
Symmetry is a natural property for any interaction measure: intuitively, it simply stipulates that features' interaction value is independent of their identity, and depends only on their intrinsic coalitional worth. 
\item[Generalized 2-Efficiency (GE):] for any $i,j \in N$, and for any $S \subseteq N \setminus ij$:
\begin{align*}
    I^{v_{[ij]}}(S\cup [ij]) = I^{v}(S\cup i) + I^{v}(S\cup j).\notag
\end{align*}
Intuitively, merging two features into one feature encoding the same information results in no additional influence. 
Generalized 2-Efficiency extends the 2-Efficiency axiom proposed by \cite{lehrer1988axiomatization} to characterize the Banzhaf value.    
\item[Limit Condition (L):] if $N$ is the set of players of the game $v$ then $I^{v}(N) = m_N(\emptyset,v) = \sum_{L \subseteq N}(-1)^{n-|L|}v(L)$. In other words, the interaction value of the set $N$ equals exactly the added value of it forming, given that no subsets of players have pre-committed themselves to joining. 
\end{description}
Next, let us introduce the notion of monotonicity for interaction indices. 
\begin{description}
\item[Monotonicity (M): ] If $\forall T\subseteq N\setminus S$, $m_S(T,v_1) \geq m_S(T,v_2)$ and for some $T^*\subseteq N\setminus S$ strict inequality holds then $I^{v_1}(S) > I^{v_2}(S)$. Moreover, if $\forall T\subseteq N\setminus S$, $m_S(T,v_1) = m_S(T,v_2)$ then $I^{v_1}(S) = I^{v_2}(S)$.
This idea extends the strong monotonicity axiom proposed by \cite{young1985monotonic}, which states that if $m_i(T,v_1)\geq m_i(T,v_2)$ for all $T\subseteq N\setminus i$ then $I^{v_1}(i)\geq I^{v_2}(i)$.
\end{description}

\citet{datta2016algorithmic} argue that the monotonicity axiom is better suited for charactering model explanations than the more `standard' linearity axiom used in the classic characterization of the Shapley value \cite{shapley1953value}, as well as the original \BII characterization \cite{grabisch1999axiomatic}. 

\subsection{Characterization Result}
First, we show that the four axioms we propose imply a generalized version of the dummy property in Lemma~\ref{lem:null_classifier}. The main result of this section is Theorem \ref{thm:main} that characterizes our explanation method uniquely. 

\begin{restatable}{lemma}{lemmanullpropoerty}\label{lem:null_classifier}
If $I^v$ satisfies (S), (GE), (L) and (M) then if $m_S(T,v) = 0$ $\forall T \subseteq N\setminus S$ then $I^v(S) = 0$.
\end{restatable}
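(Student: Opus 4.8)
The plan is to use the equality clause of Monotonicity (M) to collapse the statement onto a single canonical game, and then to pin down the value of $I$ on that game using the three remaining axioms. Let $v_0$ denote the \emph{null game}, i.e. $v_0(T)=0$ for every $T\subseteq N$. Since every discrete derivative $m_S(T,\cdot)$ is a fixed signed sum of game values, we have $m_S(T,v_0)=0$ for all $S$ and all $T\subseteq N\setminus S$. Hence any game $v$ satisfying the hypothesis $m_S(T,v)=0$ for all $T\subseteq N\setminus S$ agrees with $v_0$ on all these derivatives, so the equality clause of (M) (instantiated with $v_1=v$, $v_2=v_0$) yields $I^v(S)=I^{v_0}(S)$ at once. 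The whole problem therefore reduces to proving $I^{v_0}(S)=0$.

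To evaluate $I^{v_0}(S)$, I would first exploit (S): $v_0$ is invariant under every permutation $\pi$ (indeed $\pi v_0=v_0$), so Symmetry forces $I^{v_0}(S)$ to depend only on $|S|$. Write $c^{(n)}_{k}$ for this common value over all size-$k$ sets $S$ in an $n$-player null game. Next I would extract a recurrence in the number of players from (GE). Merging $i$ and $j$ in $v_0$ again produces a null game $(v_0)_{[ij]}$ on $n-1$ players; for $S$ of size $k-1$ avoiding $i,j$, the left-hand side $I^{(v_0)_{[ij]}}(S\cup[ij])$ equals $c^{(n-1)}_{k}$, while the right-hand side equals $I^{v_0}(S\cup i)+I^{v_0}(S\cup j)=2c^{(n)}_{k}$. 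Thus $c^{(n-1)}_{k}=2c^{(n)}_{k}$ whenever $k\le n-1$.

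The base case is supplied by the Limit Condition: in a $k$-player null game the only size-$k$ set is the grand coalition, so $c^{(k)}_{k}=I^{v_0}(N)=m_N(\emptyset,v_0)=0$ by (L). Unwinding $c^{(n)}_{k}=\tfrac{1}{2}c^{(n-1)}_{k}$ down to $n=k$ gives $c^{(n)}_{k}=2^{-(n-k)}c^{(k)}_{k}=0$, with the boundary case $k=n$ handled directly by (L). Combining this with the first paragraph produces $I^v(S)=I^{v_0}(S)=0$, as claimed.

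The step I expect to be the crux is the opening reduction: recognizing that the equality clause of (M) lets one replace $v$ by the null game wholesale, which converts an assertion about an arbitrary game into a computation on one fixed game. After that, the argument is a clean induction, and the only delicacy is bookkeeping — tracking how the feature count drops under the merge in (GE) and making sure the recurrence bottoms out exactly at the grand-coalition instance that (L) controls.
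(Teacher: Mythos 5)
Your proposal is correct and mirrors the paper's own proof: both arguments reduce the general game to the null game via the equality clause of (M), and then force the index of the null game to vanish on every set by combining (S) (equal values on equal-size sets), (GE) (a factor-of-2 relation between the original and merged null games), and (L) (the grand-coalition base case $I(N)=m_N(\emptyset,v_0)=0$, applied to fully reduced games). The only difference is presentational --- the paper fixes a target set and unmerges a fully merged coalition $[V]$ one element at a time, while you package the same computation as a recurrence $c^{(n-1)}_k = 2c^{(n)}_k$ in the number of players --- so the two proofs are essentially identical.
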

\begin{proof}
 We begin by showing a weaker claim: if $g$ is a {\em null game} where $g(S) = 0$ for all $S \subseteq N$, then $I^g(S) = 0$ for all $S \subseteq N$.
 For a given null game $g$, we have $m_S(T,g) = 0$ for all $S\subseteq N$ and $T\subseteq N \setminus S$. By symmetry, $I^g(S_1) = I^g(S_2)$ for all $S_1,S_2 \subseteq N$ with $|S_1|= |S_2|$, because for any permutation $\pi$, $g = \pi g$. 
 Also, by the Limit Condition(L), $I^g(N) = m_N(\emptyset,g) = 0$. Similarly, for all $i_1\neq  i_2 \in N$, $I^{g_{[i_1,i_2]}}(N\setminus i_1i_2 \cup [i_1,i_2]) = 0$. In fact, this property holds for all $V \subset N$: $I^{g_{[V]}}(N \setminus V \cup [V]) = 0$. 
 Now we use (GE) property for $I^{g_{[V]}}(N \setminus V \cup [V])$ by sequentially removing all $k\in V$ until it becomes a singleton. First, for all $k_1 \in V$;
 \begin{equation*}
     \begin{split}
       0 = I^{g_{[V]}}(N \setminus V \cup [V])  = I^{g_{[V\setminus k_1]}}(N \setminus V \cup [V\setminus k_1]) +\\ I^{g_{[V\setminus k_1]}}(N \setminus V \cup k_1) =  2I^{g_{[V\setminus k_1]}}(N \setminus V \cup [V\setminus k_1])   
     \end{split}
 \end{equation*}
The second equality holds because of symmetry (S) property for the game $g_{[V\setminus k_1]}$. 
Now for $k_2 \in V \setminus k_1$; we similarly use the (GE) property to obtain $I^{g_{[V\setminus k_1]}}(N \setminus V \cup [V\setminus k_1]) = 2I^{g_{[V\setminus k_1]}}(N \setminus V \cup [V\setminus k_1k_2]) $. We repeat this argument until only one element is left. We get $0 = I^{g_{[V]}}(N \setminus V \cup [V]) = 2^{|V| - 1} I^g(N\setminus V \cup k)$. This equality holds for all $V\subseteq N$ and all $k\in V$. Which shows that for all $S \subseteq N$, $I^g(S) = 0$.

Now, to prove the second part of the lemma, consider any game $v$. If $m_S(T,v) = 0$ for all $T\subseteq N\setminus S$, then $m_S(T,v) = m_S(T,g)$ for all $T\subseteq N \setminus S$, therefore by the Monotonicity property(M), $I^v(S) = I^g(S) = 0$, which concludes the proof.
\end{proof}

Before we prove the main theorem, we define important terms which will be helpful to prove the theorem. \newline 
Given a set of features $R \subseteq N$, we define the \emph{primitive game} $p^R$ as:
\begin{align*}
p^R(S)=
    \begin{cases}
      1, & \text{if } R\subseteq S \\
      0, & \text{else} 
    \end{cases}
\end{align*}
\newline
The set $\mathcal B^N$ of all Boolean functions forms a \emph{vector space}, and the set of primitive games $\mathcal P^N = \{p^R :  R \subseteq N \}$ of all primitive games is an \emph{orthonormal basis} of the vector space $\mathcal B^N$. 
Therefore, any Boolean function $v$ (In this context a cooperative game $v$) is uniquely represented by a linear combination of primitive games. 
In other words, given a cooperative game $v:2^N \to \R$, we can uniquely write as a linear combination of primitive games:
\begin{align}
    v(\cdot) = \sum_{R\subseteq N} c_R p^R(\cdot)\label{eqn:unique-decomp}
\end{align}    
The unique decomposition of cooperative games is useful for our characterization of high-dimensional model explanations. We also require some technical claims which will be useful for our characterization result (for proofs see supplementary material~\cite{supplementarymaterial}).

\begin{restatable}{theorem}{thmmain}\label{thm:main}
BII is the only high-dimensional model explanation satisfying (S),(GE),(L) and (M).
\end{restatable}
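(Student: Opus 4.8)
The plan is to prove Theorem~\ref{thm:main} in the usual two movements: first that $I_{\BII}$ satisfies the four axioms, and then that no other index does. Existence is quick: (S), (GE) and (L) are already established for $I_{\BII}$ \cite{grabisch1999axiomatic}, and (M) can be read off directly from \eqref{eqn:banzhaf_interaction_index}. Indeed $I_{\BII}^v(S)=\frac{1}{2^{n-|S|}}\sum_{T\subseteq N\setminus S}m_S(T,v)$ is a sum of the discrete derivatives $m_S(T,v)$ with strictly positive weights $2^{-(n-|S|)}$, so raising any $m_S(T,v)$ strictly raises $I_{\BII}^v(S)$ while holding all of them fixed holds it fixed --- which is exactly (M).

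For uniqueness I would take an arbitrary index $I$ obeying (S), (GE), (L), (M) and prove $I^v(S)=I_{\BII}^v(S)$ for every $v$ and $S$ by induction on the support size $|\mathcal D(v)|$, where $\mathcal D(v)=\{R\subseteq N: c_R\neq 0\}$ in the decomposition \eqref{eqn:unique-decomp}. The computational backbone is the identity $m_S(T,p^R)=\mathbbm{1}[\,S\subseteq R\subseteq S\cup T\,]$, obtained by expanding the $S$-derivative of a primitive game; its upshot is that $m_S(\cdot,p^R)\equiv 0$ whenever $S\not\subseteq R$. Fixing $S$, the terms $p^R$ with $S\not\subseteq R$ therefore leave every $m_S(T,v)$ untouched, so the truncated game $w=\sum_{R\supseteq S}c_R p^R$ satisfies $m_S(T,w)=m_S(T,v)$ for all $T$; by (M) this forces $I^v(S)=I^w(S)$, and the same holds for $I_{\BII}$. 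If some $R\in\mathcal D(v)$ does not contain $S$ then $|\mathcal D(w)|<|\mathcal D(v)|$ and the inductive hypothesis closes the case, while the base case $\mathcal D(v)=\emptyset$ is the null game dispatched by Lemma~\ref{lem:null_classifier}.

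The genuine obstacle is the complementary case, in which every $R\in\mathcal D(v)$ contains $S$, i.e.\ $S\subseteq Q:=\bigcap_{R\in\mathcal D(v)}R$: here no primitive term can be dropped and (M) alone says nothing. This mirrors exactly the ``grand coalition'' case of Young's monotonicity argument \cite{young1985monotonic}, and it is where (S), (GE) and (L) must be spent. Symmetry gives the first reduction: every permutation fixing $N\setminus Q$ fixes each $p^R$ with $R\supseteq Q$, hence fixes $v$, so (S) collapses $I^v(S)$ to a function of $|S|$ alone on subsets of $Q$, leaving only $|Q|+1$ unknown values. To pin them I would feed (GE) a feature $i\in Q$ and a feature $j\notin Q$, reading the relation $I^{v_{[ij]}}(S'\cup[ij])=I^{v}(S'\cup i)+I^{v}(S'\cup j)$ with $S'\subseteq Q\setminus i$: the middle term is the unknown symmetric core value, while $I^{v}(S'\cup j)$ is already determined, because $S'\cup j\not\subseteq Q$ places it under the previous paragraph.

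The delicate point --- and what I expect to be the hard part --- is controlling $I^{v_{[ij]}}(S'\cup[ij])$. A direct expansion shows that merging $i$ and $j$ identifies two primitive terms of $\mathcal D(v)$ precisely when they differ only in the coordinate $j$, and each such collision strictly shrinks the support and hands the term to the inductive hypothesis. When no single merge collides, iterating the merge of core features with outside features keeps the core size fixed while steadily shrinking the feature set; since $k:=|\mathcal D(v)|\geq 2$ guarantees two support sets differing in some outside coordinate, the process is forced to produce a collision before the outside shrinks away, with the Limit Condition supplying the one anchoring value $I^v(N)=m_N(\emptyset,v)$ at the foot of the recursion. Unwinding these equations determines the symmetric core values uniquely, and because $I_{\BII}$ satisfies the very same symmetry, merging and limit relations, the solution must coincide with $I_{\BII}^v(S)$, completing the induction.
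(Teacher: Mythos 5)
Your strategy is the same as the paper's: existence read off the definition, and uniqueness by induction on the number of nonzero terms in the primitive-game decomposition, with truncation-plus-monotonicity disposing of every query $S$ not contained in the core $Q=\bigcap_{R\in\mathcal D(v)}R$, and symmetry plus (GE)-merging with a collision argument handling core queries when $|\mathcal D(v)|\ge 2$. Your identity $m_S(T,p^R)=\mathbbm{1}[\,S\subseteq R\subseteq S\cup T\,]$ is correct and is exactly what the paper's Propositions A1--A2 encode, and your $k\ge 2$ collision step mirrors the paper's merging of a core feature with all of $R_1\setminus R_2\cup R_2\setminus R_1$ until two primitive terms coincide.

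The genuine gap is the case $|\mathcal D(v)|=1$, i.e.\ $v=c\, p^R$, with a query $S\subsetneq R$. Here the collision mechanism is unavailable (it needs two distinct support sets), and the only equations you have written down are core-with-outside merges $I^{v}(S'\cup i)=I^{v_{[ij]}}(S'\cup[ij])-I^{v}(S'\cup j)$, whose cross terms vanish but which merely shrink the ambient feature set while preserving ``single primitive game, support size $r$, query size $s$'', together with the Limit Condition, which at the foot of that recursion (feature set equal to the support) pins down only the grand-coalition value. Symmetry reduces the core values to one unknown per cardinality, but you have no relation connecting cardinality $s<r$ to cardinality $s+1$ or to the (L)-anchor, so ``unwinding these equations'' does not determine them. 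The missing ingredient --- and how the paper closes its $\Gamma=1$ step --- is to apply (GE) to two features $i_1\neq i_2$ \emph{inside} $R$: with $S=R\setminus i_1i_2$, symmetry forces $I^v(S\cup i_1)=I^v(S\cup i_2)$, hence $I^v(S\cup i_1)=\tfrac12 I^{v_{[i_1i_2]}}(S\cup[i_1i_2])$, and the reduced game is again a single primitive game of support size $r-1$, so iterating this intra-core merge, anchored at full support by your (L)-argument, yields $I^v(S)=c/2^{|R|-|S|}=I_{\BII}^v(S)$. Adding this one relation makes your induction go through and renders it essentially identical to the paper's proof.
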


\begin{proof}
We first show that \BII satisfies all the properties. 
\BII trivially satisfies (S), (L) and (M). To show that it satisfies (GE), take any $S \subseteq N \setminus ij$
\begin{align*}
      I_\BII^v(S\cup i ) =& \frac{1}{2^{n-s-1}}\sum_{T\subseteq N\setminus(S\cup i)} m_{S\cup i} (T,v)\\
  = & \frac{1}{2^{n-s-1}}\sum_{T\subseteq N\setminus(S\cup ij)} [ m_S (T \cup i, v) - m_S (T, v )]\\
     & + \frac{1}{2^{n-s-1}}\sum_{T\subseteq N\setminus(S\cup ij)} [m_S(T \cup ij , v) - m_S (T\cup j, v)]    
\end{align*}

A similar calculation for $j$ shows that 
\begin{align*}
    I_\BII^v(S\cup j )  =&  \frac{1}{2^{n-s-1}}\sum_{T\subseteq N\setminus(S\cup ij)} [ m_S (T\cup j , v) - m_S (T, v )] \\
    &+ \frac{1}{2^{n-s-1}}\sum_{T\subseteq N\setminus(S\cup ij)} [m_S (T \cup ij,v)- m_S (T \cup i , v)]    
\end{align*}

Thus, $I_\BII^v(S\cup i ) +  I_\BII^v(S\cup j )$ equals
\begin{align}
    &\frac{1}{2^{n-s-2}} \times\sum_{T\subseteq N\setminus(S\cup ij)} [ m_S (T \cup ij, f) - m_S (T,f )]\label{eqn:gE}
\end{align}
Equation~\eqref{eqn:gE} shows that $I^{v_{[ij]}}(S\cup [ij]) = I^{v}(S\cup i) + I^{v}(S\cup j)$.

\BII satisfies the four axioms; to show that it {\em uniquely} satisfies them, we use the fact that $v$ can be uniquely expressed as the sum of primitive games;
\begin{align}
    v = \sum_{R\subseteq N} C_R p^R\label{eqn:basis}
\end{align}

We define the \textit{index} $\Gamma$ of a cooperative game $v$ to be the minimum number of terms in the expression of the form \eqref{eqn:basis}. We prove the theorem by induction on $\Gamma$. For $\Gamma=0$, in Lemma~\ref{lem:null_classifier}, $I^v(S)=0$ for all $S\subseteq N$, which coincides with the Banzhaf interaction index. 

If $\Gamma=1$ then $v = C_Rp^R$ for some $R \subseteq N$. Consider $S \not \subseteq R$; Proposition A1 in the supplementary material~\cite{supplementarymaterial} implies that $m_S(T,v)=0$ for all $T \subseteq N$, which in turn implies $I^v(S) = 0 = I_\BII^v(S)$. 
By Lemma A3 in the supplementary material~\cite{supplementarymaterial}, $I^v(R) =C_R$, which equals $I_\BII^v(R)$ by Proposition A2 in the supplementary material \cite{supplementarymaterial}. 
To complete the proof for the first inductive step, we need to show that for all $S \subseteq R$, $I^v(S) = I_\BII^v(S) = \frac{C_R}{2^{|R| - |S|}}$. 
If $S_1, S_2 \subseteq R$ and $s_1=s_2$ then by symmetry, $I^v(S_1) = I^v(S_2)$: we can define a permutation $\pi $ over $N$ such that $S_2$ bijectively maps to some $S_1$, and all $i \notin S_1\cup S_2:$ are invariant. By the Symmetry property $I^v(S_2) = I^{\pi v}(S_1)$; however, $\pi v = v$ because $v = C_Rp^R$.  

Now, consider any $i_1\neq i_2\in R$ and define $S:= R\setminus \{i_1,i_2\}$; by the GE property, we can write 
\begin{equation*}
    I^{v_{[i_1i_2]}}(S \cup [i_1i_2]) = I^{v}(S \cup i_1) + I^v(S\cup i_2)
\end{equation*}
which implies for any $Q\subset R$ with $|Q| = |R|-1$, $I^{v}(Q) = \frac{1}{2}I^{v_{[i_1i_2]}}(S \cup [i_1i_2])$. The reduced game $v_{[i_1i_2]}$ is
\begin{equation*}
    v_{[i_1i_2]}(S')=
    \begin{cases}
      C_R, & \text{if  } S\cup[i_1i_2] \subseteq S'\  \\
      0, & \text{else} 
    \end{cases}
\end{equation*}    
By Lemma A3 in the supplementary material~\cite{supplementarymaterial}, $I^{v_{[i_1i_2]}}(S \cup [i_1i_2]) = C_R$ and $I^{v}(Q) = \frac{C_R}{2}$. This property holds for all $T \subset N$, $v_{[T]}$; $I^{v_{[T]}}(N \setminus T \cup [T]) = C_R$. By inductively using the (GE) property, in a manner similar to Lemma A3 in the supplementary material~\cite{supplementarymaterial}, we show that $I^v(Q) = \frac{C_R}{2^{|R|-|Q|}}$. By Proposition A2 in the supplementary material~\cite{supplementarymaterial}, this coincides with Banzhaf interaction index concluding the first inductive step.

To complete the proof, assume that $I^v(S)$ coincides with the Banzhaf interaction index whenever the index of the game $v$ is at most $\Gamma = \gamma $. Suppose that $v$ has an index $\gamma+1$, and expressed as
\begin{equation*}
    v = \sum_{k=1}^{\gamma+1} C_{R_k}p^{R_k}
\end{equation*}
Let $R = \bigcap \limits^{\gamma +1}_{k=0} R_k$, and suppose that $S \not \subseteq R$. We define another game $w$:
\begin{equation*}
    w = \sum_{k:S \subseteq R_k} C_{R_k}p^{R_k} 
\end{equation*}
Since $S \not \subseteq R$, the index of $w$ is strictly smaller than $\gamma + 1$. We claim that for all $T\subseteq N\setminus S$; $m_S(T,v) = m_S(T,w)$. Indeed, consider any $T \subseteq N\setminus S$; $m_S(T,v)$ equals
\begin{align*}
    \sum_{L\subseteq S} (-1)^{|S|-|L|} v(T \cup L) =  \sum_{L\subseteq S} (-1)^{|S|-|L|} \sum_{k = 1}^{\gamma + 1} C_{R_k} p^{R_k}(T \cup L)&=\\
    \sum_{k = 1}^{\gamma + 1} \sum_{L\subseteq S} (-1)^{|S|-|L|}  C_{R_k} p^{R_k}(T \cup L)
            = \sum_{k = 1}^{\gamma + 1}C_{R_k} m_S(T,p^{R_k})&=\\
           \sum_{k:S \subseteq R_k} C_{R_k} m_S(T,p^{R_k}) =  m_S(T,w)
\end{align*}

The second-last equality holds by Proposition A1 in the supplementary material~\cite{supplementarymaterial}, hence by induction on $\Gamma$ and monotonicity(M) $I^v(S)$ coincides with \BII for all $S \not \subseteq R $.

It remains to show that $I^v(S)$ coincides with \BII when $S\subseteq  R$. for any $S\subseteq R$, consider any $i\in S$ and define $S':= S \setminus i$. Take any $j \in N$ such that $j_1 \in R_1\setminus R_2 \cup R_2\setminus R_1$. By the (GE) property, we can write 
\begin{equation}
    I^v(S) = I^{v_{[ij_1]}}(S'\cup [ij_1]) - I^v(S'\cup j_1)\label{ite_1}
\end{equation}

In Equation \eqref{ite_1}, $S'\cup j_1 \not \subseteq R$, therefore as previously shown, $I^v(S'\cup j_1)$ coincides with \BII for the game $v$. Consider the restricted game $v_{[ij_1]}$:
\begin{equation*}
    v_{[ij]} = \sum_{k=0}^{\gamma+1} C_{R_k} p_{[ij_1]}^{R_k\setminus ij_1\cup [ij_1]}
\end{equation*}
Consider $j_2\neq j_1 \in  R_1\setminus R_2 \cup R_2\setminus R_1$. By the (GE) property, 
\begin{equation}
    I^{v_{[ij_1j_2]}}(S'\cup [ij_1j_2]) = I^{v_{[ij_1]}}(S'\cup [ij_1])
    + I^{v_{[ij_1]}}(S'\cup j_2) \label{ite_2}
\end{equation}
In Equation \eqref{ite_2}, $S'\cup j_2 \not \subseteq \bigcap \limits^{I+1}_{k=1} R_k\setminus ij_1\cup [ij_1] $, therefore as we have shown before, $I^{v_{[ij_1]}}(S'\cup j_2) = I_\BII^{v_{[ij_1]}}(S'\cup j_2)$. Let us denote $T = R_1\setminus R_2 \cup R_2\setminus R_1$ and $T' = i\cup T $. By repeating this argument for all $j_3,\dots ,j_t \in T $ and exploiting the (GE) property for each $j_\ell$, we can write $I^v(S)$ as:
\begin{align}
    I^v(S) = I^{v_{[T']}}(S'\cup [T'])-I^v(S'\cup j_1) -  \sum_{l=1 }^t I^{v_{[ij_1\dots j_\ell]}}(S' \cup j_\ell) \label{eqn:ite_3} 
\end{align}

 All of the summands in \eqref{eqn:ite_3} coincide with \BII, because $S'\cup j_\ell \not \subseteq \bigcap \limits^{I+1}_{k=1} R_k\setminus ij_1\dots j_{\ell-1}\cup [ij_1\dots j_{\ell-1}]$  for all $\ell=1,\dots,t$. We can write the reduced game $v_{[T']}$ as 
\begin{align*}
 v_{[T']} = (C_{R_1}+C_{R_2})p_{[T']}^{(R_1 \cap R_2 \setminus i) \cup [T'] }+ \sum_{k=3}^{\gamma+1} C_{R_k} p_{[T']}^{(R_k\setminus T') \cup [T']}
\end{align*}

Thus, the index of the reduced game $v_{[T']}$ is strictly smaller than $\gamma+1$. By induction, $I^{v_{[T']}}(S' \cup [V])$ coincides with \BII for the reduced game $v_{[T']}$. $I^v(S)$ can be written as
\[
    I^v(S) = I^{v_{[T']}}(S'\cup [T']) - \sum_{l=1 }^t I^{v_{[ij_1\dots j_\ell]}}(S' \cup j_\ell)
\]
By using the (GE) property inductively, $I_\BII^v(S)$ can also be written in the same form, which implies that $I^{v}(S)$ coincides with the Banzhaf interaction index for all, $S \subseteq R$.
\end{proof}

\subsection{Explaining Our Model Explanations}\label{sec:explainingModelExpl}
Does the \BII measure make sense in the model explanation domain? This is purely a function of the strength of the axioms we set forth. 
Symmetry is natural enough: if a model explanation depends on the indices of its features then it fails a basic validity test. The index in which a feature appears has no bearing on the underlying trained model (ideally), nor does it affect the outcome.  

Recall that Generalized Efficiency requires that model explanations should be invariant under feature merging. In other words - artificially treating a pair of features as a single entity (while maintaining the same underlying model) should not have any effect on how feature behaviors are explained. Interestingly, Shapley values are not invariant under feature merging, a result shown by \citet{lehrer1988axiomatization}. The following examples illustrate what this entails in actual applications.

\begin{example}\label{ex:moviereviews}
Consider an sentiment analysis task where a model predict if a movie review was positive. In a preliminary step the text is parsed by a parser to be machine readable. This can be done in many different ways. For example the sentence ``This isn't a absolutely terrible movie'' Can be parsed as
\begin{center}
| This | isn't | a | absolutely | terrible | movie | . |
\end{center}
or as
\begin{center}
| This | is | n't | a | absolut | ely | terrible | movie | . |. 
\end{center}
Generalized Efficiency ensures that the influences of ``is'' and ``n't'' in the second version add up to the influence of ``isn't'' in the first. In other words, Generalized Efficiency ensures that the influence of features generated through different parsers behaves in a sensible manner. 
\end{example}

\begin{example}\label{ex:correlated-features}
Features might be ``merged'' in another situation when features that were readily available during the training of a model end up being costly to obtain during its deployment. 
If additionally these features are highly correlated with other features they might just be coupled. E.g. generally birds can fly, so the features \textsc{is\_bird} and \textsc{can\_fly} may simply be merged at prediction time\footnote{The authors are aware of the existence of ostriches, emus, penguins and the fearsome cassowary.}, to make it easier to enter information into a classifier. 
Again, Generalized Efficiency ensures that the influence of the merged feature relates in a natural way to the influence of the original features.
\end{example}

The \emph{Limit condition} normalizes the overall influence to be the discrete derivative of $v(\cdot,f)$ with respect to $N$. In other words, the total influence distributed to sets of features equals the total marginal effect of reverting features to their baseline values. 
This is an interesting departure from other efficiency measures. Shapley-based measures require efficiency with respect to $f(\vec x)$ (or variants thereof), i.e. the total amount of influence should equal the total value the classifier takes at the point of interest (or the difference between the classifier and the baseline value). We require that the total influence equals the (discretized) rate in which features change the outcome. This makes \BII more similar in spirit to gradient based model explanations, which are often used for generating model explanations in several application domains \cite{simonyan2013deep}. 

\emph{Monotonicity} is a very natural property in the model explanation domain: if a set of features has a greater effect on the value $f(\vec x)$, this should be reflected in the amount of influence one attributes to it. This has already been established in prior works, for Shapley-based measures \cite{datta2016algorithmic}. 
However, this property does not naturally generalize when using Shapley-based high-dimensional model explanations. \citet{agarwal2019new} propose a novel generalization of the Shapley value to high-dimensional model explanations, which fails monotonicity for smaller interactions (size of $<k$) for $k$-th order explanations, however, interactions of size $k$ follow monotonicity. Example~\ref{ex:fx1x2x3} highlights issues that may arise when monotonicity is not preserved.
\begin{example}\label{ex:fx1x2x3}
Given a function $f_c(x_1,x_2,x_3) =  c x_1 x_2 x_3$ with $c >0$ defined on binary input space (for example, $f$ is the result of an image classification task where $x_i$ denotes the presence/absence of particular super-pixel). 
We assume that the baseline is $\vec x' = (0,0,0)$. Thus, $v(S,\vec x,\vec x',f_c) = 0$ if $\{2,3\} \not\subseteq S$, resulting in $v(\{2,3\},f_c) = 0$ and $v(N,f_c) = v(\{1,2,3\},f_c)= cx_1x_2x_3 $. 
What is the interaction value between $1$ and $2$? Intuitively $\{1,2\}$ offer some degree of interaction that monotonically grows as $c$ increases. Moreover, it is easy to see that $v(\cdot,f_c) \ge v(\cdot,f_{c'})$ whenever $c \ge c'$. Set-QII fails to satisfies the monotoncity, and fails to capture the interaction between $\{1,2\}$ for any $c$. Set-QII($\{1,2\},S,f_c)=0$ for all $c$.  

Similarly, the Shapley-Taylor interaction index for $S = \{1,2\}$ and $k = 3$ is $0$ as it does not follow the monotonicity property, however for $k = 2$ it satisfies monotonicity and interaction value for $\{1,2\}$ is $ \frac c 3$. The \BII value for $\{1,2\}$ is $c$.

\end{example}
 
In Section~\ref{sec:geo}, we show that \BII can be interpreted as a polynomial approximation, offering additional intuition as to why our explanation method is \emph{optimal}.

\section{Geometrical Interaction and \BII }\label{sec:geo}
The geometry of model explanations is relatively well understood for attribute-based methods \cite{ribeiro2016should,sliwinski2019axiomatic,lundberg2017unified}; 
\citet{ribeiro2016should} view linear explanation methods as local linear approximations of $f(\cdot)$ around a point of interest $\vec x$. 
Linear model explanations can be thought of as functions taking the following form: $g(\Vec{x}) = \phi_0 + \sum_{i = 1}^N \phi_i x_i$, where $\phi_i$ captures the importance of feature $i$. 
Taking this interpretation, linear model explanations can be `objectively' bad explanations: they are a poor local approximation of the underlying model $f$ as black-box model can be highly non-linear around $\poi$ \cite{sliwinski2019axiomatic, tsang2017detecting}. 
In order to better capture the behavior of a black-box model $f$, we approximate it using higher-order polynomials. 
For better visualization, we first assume that the black-box model $f:\{0,1\}^N \rightarrow \mathbb{R}$ takes a binary input vector mainly referred to as the humanly understandable feature representation  \cite{lundberg2017unified, ribeiro2016should}. 
 First, we start with quadratic approximation of the black-box model $f(\cdot)$, $$g(\vec{x}) = \phi_0 + \sum_{i = 1}^N \phi_i x_i + \sum_{i<j} \phi_{i,j}x_ix_j $$
 
In the above equation, $\phi_{i,j}$ captures the interaction between features $i$ and $j$; 
$\phi_i,\phi_j$ capture the importance of $i$ and $j$. Thus, it is not unreasonable to assume that $\phi_{i,j}$ captures the pure interaction effect of $i$ and $j$: we can delegate the singular effects to $\phi_i$, having the resultant coefficient of $x_ix_j$ capture the `pure' interaction between $i$ and $j$. This also connects with the idea of the {\em Statistical Interactions} \cite{fisher1992statistical,gelman2005analysis}.
For instance, consider a sentiment analysis problem, both the tokens ``bad'' and ``not'' have negative influence on the prediction. However, when they are present together as ``not bad'', their influence is positive. In this simple example, it would be desirable to have $\phi_{\text{\{``not'',``bad''\}}}>0$. The idea of higher order interactions can be extended similarly.
 
Consider a global polynomial approximation of $f(\cdot)$ by a $k$-degree polynomial: 
\begin{equation}
     g^k(\Vec{x}) = \phi_0 + \sum_{S'\subset N: |S'|<k} \left( \phi_{S'}\prod_{j\in S' }x_j \right) + \sum_{S\subseteq N; |S|=k} \left( \phi_S\prod_{j\in S }x_j \right). \label{eqn:k-thapprox}
\end{equation}
Again, to capture interaction among the set of features $S$ such that $|S| = k$, we should remove all internal interaction effects captured by $\phi_{S'}$ for $S' \subset S$ as we intend to capture the  
The polynomial $g^k$ is meant to locally approximate $f$ around the POI $\vec x$; what is the best approximation? Finding the best fitting polynomial of the highest possible degree seems like a natural objective. However, taking this approach runs the risk of ignoring lower order feature interactions and their possible effects; this is shown in Example \ref{ex:lower-order-interaction}.

\begin{example}\label{ex:lower-order-interaction}
Consider the degree 3 polynomial studied in Example \ref{ex:fx1x2x3}, $f(x_1,x_2,x_3) = cx_1x_2x_3$ with the baseline set to $(1,1,1)$ (rather than $(0,0,0)$ as was the case in Example \ref{ex:fx1x2x3}). 
The best approximation to $f$ is clearly itself. However, if we do so, then the interaction coefficients for variable pairs will be zero. This is arguably undesirable: for example, if $\vec x = (0,0,1)$, then $x_3$ has virtually no impact (it is already set at the baseline). Similarly, $x_1$ and $x_2$ have little individual effect, but do have significant joint effect - it is only when both are set to $1$ that we observe any change in label.  
 \end{example}
 
Now we formally define the optimization problem to find the ``best'' $k$-degree polynomial approximation of the black-box model $f(\cdot)$ globally. 
Let $\mathcal{P}^k$ be the set of $k$-degree polynomials.
We are interested in finding a polynomial $g^k_f(\cdot)$ which globally minimizes the quadratic loss between $f(\vec x)$ and $g(\vec x)\in \mathcal{P}^k$ for all $\vec x \in 2^N$, i.e.
\begin{equation}
    g^k_f(\vec x) = \argmin_{g(\cdot) \in \mathcal{P}^k} \sum_{\vec x \in 2^N} [f(\vec x) - g(\vec x)]^2 .\label{eqn:opt_poly}
\end{equation}
The interaction among features in $S$ with $|S| = k$ is measured as the coefficient of $\prod_{i \in S} x_i$ in the least square approximation of $f(\cdot)$ with a polynomial of degree $k$. Theorem~\ref{thm:opt_poly_approximation} shows that this geometrical definition of feature interaction coincides with the Banzhaf interaction index. 
\begin{theorem}\label{thm:opt_poly_approximation}
Let $g^k_f(\vec x)$ be the $k$-degree solution of the optimization problem in Equation~\eqref{eqn:opt_poly}. Then the coefficients of $\prod_{i \in S}x_i$ for $|S|=k$ in $g^k_f(\vec x)$ is given by $I_{\texttt{BII}}^v(S)$. 
\end{theorem}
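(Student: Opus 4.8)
The plan is to recognize the optimization in \eqref{eqn:opt_poly} as an orthogonal projection and to compute it in a basis where the projection is trivial. Equip the space of real functions on $\{0,1\}^N$ with the inner product $\langle a,b\rangle = \sum_{\vec x \in 2^N} a(\vec x)\,b(\vec x)$, so that \eqref{eqn:opt_poly} asks for the $\langle\cdot,\cdot\rangle$-orthogonal projection of $f$ onto the subspace $\mathcal P^k$ of functions of degree at most $k$ (recall that on $\{0,1\}^N$ every polynomial is multilinear, so $\mathcal P^k = \mathrm{span}\{\prod_{i\in T}x_i : |T|\le k\}$). The monomial basis $\{p^R\}$ is the one in which we ultimately want to read off coefficients, but to carry out the projection cleanly it is far more convenient to work in an orthogonal basis of $\mathcal P^k$.

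First I would pass to the parity basis $\chi_T(\vec x) := \prod_{i\in T}(2x_i-1)$. Since each $2x_i-1$ is affine in $x_i$, we have $\deg \chi_T = |T|$, and hence $\{\chi_T : |T|\le k\}$ spans exactly $\mathcal P^k$. A coordinate-wise computation shows $\langle \chi_S,\chi_T\rangle = 2^n\,\delta_{ST}$, i.e. $\{\chi_T\}$ is orthogonal; consequently the projection is a literal truncation, $g^k_f = \sum_{|T|\le k}\hat f(T)\,\chi_T$ with $\hat f(T) = \tfrac{1}{2^n}\langle f,\chi_T\rangle$.

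Next I would convert back to the monomial basis to extract the coefficient of $\prod_{i\in S}x_i$ for $|S|=k$. Expanding $\chi_T = \sum_{A\subseteq T} 2^{|A|}(-1)^{|T|-|A|}\prod_{i\in A}x_i$, the monomial $\prod_{i\in S}x_i$ can arise from a basis element $\chi_T$ only when $S\subseteq T$; together with $|T|\le k=|S|$ this forces $T=S$, so the contribution is $2^{|S|}\hat f(S)$. Hence the sought coefficient equals $2^k\hat f(S) = \tfrac{1}{2^{n-k}}\sum_{\vec x\in 2^N} f(\vec x)\prod_{i\in S}(2x_i-1)$. The main content is exactly this isolation step — verifying that neither the lower-degree parity terms nor any other degree-$k$ term can contribute to this particular monomial — which is precisely where both orthogonality of $\{\chi_T\}$ and the fact that $|S|=k$ is the maximal degree are used.

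Finally I would match this against $I^v_{\BII}(S)$. Splitting the support $X = \{i : x_i = 1\}$ of each $\vec x\in 2^N$ as $X = L\cup T$ with $L = X\cap S\subseteq S$ and $T = X\cap(N\setminus S)\subseteq N\setminus S$, we have $\prod_{i\in S}(2x_i-1) = (-1)^{|S|-|L|}$, so the sum rearranges into $\sum_{T\subseteq N\setminus S}\sum_{L\subseteq S}(-1)^{|S|-|L|} f(\mathbbm 1_{L\cup T})$. Writing $v(A)=f(\mathbbm 1_A)-f(\vec x')$, the additive baseline constant drops out because $\sum_{L\subseteq S}(-1)^{|S|-|L|}=0$ for $k=|S|\ge 1$, so replacing $f$ by $v$ leaves the sum unchanged and yields $\sum_{T\subseteq N\setminus S} m_S(T,v)$ (this is also why the choice of baseline is immaterial to the top-degree coefficient). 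The coefficient is therefore $\tfrac{1}{2^{n-|S|}}\sum_{T\subseteq N\setminus S} m_S(T,v) = I^v_{\BII}(S)$ by \eqref{eqn:banzhaf_interaction_index}, as claimed.
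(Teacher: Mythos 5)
Your proof is correct, and it takes a genuinely different route from the paper --- indeed, the paper offers no in-line argument at all, deferring entirely to \cite[Theorem~4.2]{hammer1992approximations}, whereas you supply the missing self-contained derivation. Concretely, you recast \eqref{eqn:opt_poly} as an orthogonal projection under the counting inner product, pass to the parity (Walsh) basis $\chi_T(\vec x)=\prod_{i\in T}(2x_i-1)$, use its orthogonality to reduce the projection to Fourier truncation, and then re-expand in monomials, where the hypothesis that $|S|=k$ is the maximal retained degree forces $T=S$ to be the only contributor to the coefficient of $\prod_{i\in S}x_i$; your final support-splitting step correctly converts $2^k\hat f(S)=\tfrac{1}{2^{n-k}}\sum_{\vec x}f(\vec x)\chi_S(\vec x)$ into $\tfrac{1}{2^{n-|S|}}\sum_{T\subseteq N\setminus S}m_S(T,v)=I^v_{\BII}(S)$ via \eqref{eqn:banzhaf_interaction_index}. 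This is essentially the Boolean--Fourier proof of the Hammer--Holzman result. What the citation buys the paper is brevity; what your argument buys is transparency on two points the paper leaves implicit: first, the game $v$ must be the canonical one identifying subsets of $N$ with $0/1$ vectors (POI $\vec 1$, baseline $\vec 0$), and your observation that the additive constant cancels in the alternating sum over $L\subseteq S$ shows the top-degree coefficients are baseline-independent; second, the restriction $|S|=k$ is essential, since for $|S|<k$ the parities $\chi_T$ with $S\subsetneq T$, $|T|\le k$ also contribute to the coefficient of $\prod_{i\in S}x_i$, so lower-order coefficients of $g^k_f$ need not equal $I^v_{\BII}(S)$ --- exactly the phenomenon behind Example~\ref{ex:lower-order-interaction}.
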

The proof of the Theorem is a direct corollary of \cite[Theorem~4.2]{hammer1992approximations}. Theorem~\ref{thm:opt_poly_approximation} shows that k-th order interaction obtains via least square approximation of the black-box model using k-th order polynomial coincides with \BII and extend a geometrical argument for ``good" feature-based explanation to feature interactions. Moreover, the least square approximation of the model via polynomial also resonates with the Statistical interaction among features as we already discussed. Therefore \BII is not only obtained from strong axiomatic but also derived from an intuitive optimization problem. 
\section{Discussion and Future Work}
%
Designing \emph{provably sound} higher-order explanations for machine learning models in high stake domains is an important problem. Ideally, we want to design explanations we can trust.
We offer a variety of reasons to trust in \BII: it uniquely satisfies a set of natural properties --- if we believe that these are sensible then our job is done. That said, the authors believe that a normative approach is critical in the design of fair, transparent AI solutions. Rather than debating approaches, one should debate the fundamental properties they are guaranteed to satisfy. 
Deriving model explanations from norms offers an intuitive justification for the chosen interaction measure, which fosters trust in the explanation method. 
As shown in Section \ref{sec:experiments}, some of our fundamental properties are absolutely critical for the design of sensible explanations. 
Even if one adheres to an optimization-based approach to explanation design, we show that \BII is the optimal solution to a natural objective. 

While \BII certainly satisfies several design criteria, it is not without issues. The first challenge is computational. Feature-based explanations are computationally intensive as it is, so it should be no surprise that computing higher-order interactions comes at a higher cost. There are, however, good reasons to believe that \BII can be computed efficiently on simpler ML model classes, and that low error approximations can be found quickly. This, we believe, is an important direction for future work.   

\section*{Acknowledgement}
This research was supported by an NRF Research Fellowship R-252-000-750-733 and the [AISG: R-252-000-A20-490] the National Research Foundation, Singapore under its AI Singapore Programme (AISG Award No: AISG-RP-2018-009). This work was done when the first and the third authors were affiliated with the National University of Singapore. The authors thank GAIW 2020, WHI 2020, NeurIPS 2020, and FAccT 2021 reviewers for their useful comments. 

\nocite{chalkiadakis2011computational,bachrach2008approximating}
\bibliographystyle{ACM-Reference-Format}
\bibliography{abb,interaction-index-biblio}

\appendix
\section{Useful Technical Results}\label{app:techresult}

\begin{restatable}{proposition}{propforprimitive}\label{prop:primitive}
Given a primitive game $p^R$, for any $S \subseteq N$:
if $S \not \subseteq R$ then $\forall T\subseteq N\setminus S $, $m_S(T,p^R) = 0$. In particular, $I_\BII^{p^R}(S) = 0$.
\end{restatable}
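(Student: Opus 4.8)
The plan is to establish the first assertion directly from the definition of the discrete derivative and then read off the $\BII$ claim as an immediate corollary of the averaging formula \eqref{eqn:banzhaf_interaction_index}. Recall that
\[
m_S(T,p^R) = \sum_{L\subseteq S} (-1)^{|S|-|L|}\, p^R(T\cup L),
\]
and that $p^R(T\cup L) = \mathbbm{1}[R\subseteq T\cup L]$. Since $S\not\subseteq R$, I would fix a witness coordinate $i\in S\setminus R$ and use it to organize the $2^{|S|}$ summands into cancelling pairs.

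The key step is the following pairing argument. I partition the family $\{L : L\subseteq S\}$ into pairs $\{L',\,L'\cup i\}$ indexed by $L'\subseteq S\setminus i$. Because $T\subseteq N\setminus S$ we have $i\notin T$, and because $L'\subseteq S\setminus i$ we have $i\notin L'$; hence $i\notin T\cup L'$. Since in addition $i\notin R$, adjoining $i$ cannot affect whether $R$ is covered, so $R\subseteq T\cup L'$ iff $R\subseteq T\cup L'\cup i$, giving $p^R(T\cup L') = p^R(T\cup L'\cup i)$. The two members of each pair thus carry equal values of $p^R$ but opposite signs:
\[
(-1)^{|S|-|L'|}\,p^R(T\cup L') + (-1)^{|S|-|L'|-1}\,p^R(T\cup L'\cup i) = 0.
\]
Summing over all $L'\subseteq S\setminus i$ collapses the whole alternating sum, yielding $m_S(T,p^R)=0$ for every $T\subseteq N\setminus S$. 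Equivalently, one can phrase this as a factoring: writing $\mathbbm{1}[R\subseteq T\cup L]=\mathbbm{1}[R\setminus T\subseteq L]$, the summand depends on $L$ only through coordinates in $R\setminus T$, and since $i\notin R\setminus T$ the alternating sum over the single coordinate $i$ contributes a factor $(-1)^0+(-1)^{1}=0$.

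With the first claim in hand, the ``in particular'' statement is immediate: substituting $m_S(T,p^R)=0$ into \eqref{eqn:banzhaf_interaction_index} gives $I_\BII^{p^R}(S) = \frac{1}{2^{n-|S|}}\sum_{T\subseteq N\setminus S} m_S(T,p^R) = 0$. I do not anticipate a genuine obstacle in this proof; the only point that truly requires care is verifying that the witness $i$ lies outside both $T$ and $L'$, since that is exactly what forces $p^R(T\cup L')$ and $p^R(T\cup L'\cup i)$ to agree and drives the cancellation. The hypotheses $T\subseteq N\setminus S$ (so $i\notin T$) and $i\in S\setminus R$ (so $i\notin R$) are precisely what make this work, and are used in no other place.
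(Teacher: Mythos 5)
Your proof is correct. It establishes exactly the same vanishing claim as the paper, but by a different mechanism: the paper splits into two cases --- if $R \not\subseteq T\cup S$, every summand $p^R(T\cup L)$ is already zero; otherwise it observes that $p^R(T\cup L)=1$ precisely when $S\cap R\subseteq L$, reindexes the surviving terms over $L'\subseteq S\setminus R$, and invokes the alternating binomial identity $\sum_{k=0}^{m}(-1)^k\binom{m}{k}=0$ with $m=|S\setminus R|\geq 1$. You instead fix a single witness $i\in S\setminus R$ and cancel the whole sum via the involution $L'\mapsto L'\cup i$ over $L'\subseteq S\setminus i$, using that $i\notin T\cup L'$ and $i\notin R$ force equal indicator values with opposite signs. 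Your argument is uniform (in the paper's Case 1 your pairs simply cancel $0$ against $0$), avoids the case distinction, and replaces the binomial identity with elementary parity cancellation, so it is slightly more self-contained; the paper's route has the side benefit of making explicit \emph{which} summands are nonzero (those with $S\cap R\subseteq L$), a style of computation that reappears in its proof of Proposition A2, where the value of $I_\BII$ on scalar multiples of primitive games is actually computed rather than shown to vanish. Your handling of the ``in particular'' part --- substituting $m_S(T,p^R)=0$ into the averaging formula \eqref{eqn:banzhaf_interaction_index} --- matches what the paper leaves implicit, and your identification of where the hypotheses $T\subseteq N\setminus S$ and $i\in S\setminus R$ are used is exactly right.
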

\begin{proof}
Suppose that $S \not \subseteq R$. We distinguish between two cases. 
\begin{description}
\item[Case 1:] $R\not \subseteq T\cup S$. In this case, $\forall L \subseteq S$, $T \cup L$ does not contain $R$, thus $p^R(T\cup L) = 0$ which implies $m_S(T,p^R) = 0$.

\item[Case 2:] $R \subseteq T\cup S$. In this case, $m_S(T,p^R)$ equals

\begin{align*}
\sum_{L\subseteq S} (-1)^{|S|-|L|}p^R(L \cup T)=&\sum_{L\subseteq S:S \cap R \subseteq L} (-1)^{|S|-|L|} =\\
\sum_{L\subseteq S\setminus R} (-1)^{|S|-|S\cap R|-|L|}=& \sum_{L\subseteq S\setminus R} (-1)^{|S\setminus R|-|L|}=\\
\sum_{k=0}^{|S\setminus R|} (-1)^k \binom{|S\setminus R|}{k} = &0
\end{align*}
\end{description}
Thus in either case $m_S(T,p^R) = 0$, and we are done.
\end{proof}

We note that Proposition \ref{prop:primitive} immediately holds for any game that is a scalar multiple of a primitive game, i.e. for any $v = c\times p^R$, and any $S \not \subseteq R$, $I_\BII^v(S) = 0$.

\begin{restatable}{proposition}{propforBIIprimitive}\label{prop:primBanzhaf}
Given a primitive game $p^R$, for any $S \subseteq N$:
If $v = c\times p^R$, and $S \subseteq R$ then 
$$I_{\BII}^v(S) = \frac{c}{2^{|R|-|S|}}.$$
\end{restatable}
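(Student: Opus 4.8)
The plan is to compute $I_\BII^v(S)$ directly from the defining formula \eqref{eqn:banzhaf_interaction_index}, exploiting linearity to reduce to the case $v = p^R$ and carrying the scalar $c$ through at the end. Since the inner double sum in \eqref{eqn:banzhaf_interaction_index} is linear in $v$, we have $I_\BII^{c\,p^R}(S) = c\,I_\BII^{p^R}(S)$, so it suffices to show $I_\BII^{p^R}(S) = 1/2^{|R|-|S|}$ whenever $S \subseteq R$.

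The heart of the argument is to evaluate the discrete derivative $m_S(T,p^R) = \sum_{L \subseteq S}(-1)^{|S|-|L|} p^R(T \cup L)$ for $S \subseteq R$ and $T \subseteq N \setminus S$. Here I would observe that a term survives only when $p^R(T \cup L) = 1$, i.e.\ when $R \subseteq T \cup L$. Using $S \subseteq R$, the elements of $S$ must be supplied by $L$ (as $T$ is disjoint from $S$ and $L \subseteq S$), which forces $L = S$; the remaining elements $R \setminus S$ must then be supplied by $T$, i.e.\ $R \setminus S \subseteq T$. Consequently exactly one term, the $L = S$ term with sign $(-1)^{|S|-|S|} = 1$, can be nonzero, giving $m_S(T,p^R) = \mathbbm{1}[\,R \setminus S \subseteq T\,]$.

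With this identity in hand the computation reduces to a counting step: I would count the sets $T \subseteq N \setminus S$ containing the fixed subset $R \setminus S$. Since $S \subseteq R$ gives $R \setminus S \subseteq N \setminus S$, the number of such $T$ is $2^{(n-|S|)-(|R|-|S|)} = 2^{n-|R|}$. Substituting into \eqref{eqn:banzhaf_interaction_index} yields $I_\BII^{p^R}(S) = 2^{n-|R|}/2^{n-|S|} = 1/2^{|R|-|S|}$, and multiplying by $c$ completes the proof.

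The main obstacle is the bookkeeping in the second step: correctly pinning down that $L = S$ is forced and that the leftover requirement is exactly $R \setminus S \subseteq T$. This relies crucially on the two facts $T \cap S = \emptyset$ (from $T \subseteq N \setminus S$) and $S \subseteq R$; everything afterwards is a routine counting and substitution, dual in spirit to the vanishing computation already carried out for $S \not\subseteq R$ in Proposition~\ref{prop:primitive}.
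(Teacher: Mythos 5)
Your proposal is correct and follows essentially the same route as the paper's own proof: both collapse the inner sum over $L$ to the single $L=S$ term (using $T\cap S=\emptyset$ together with $S\subseteq R$), observe that only those $T$ containing $R\setminus S$ contribute, and count exactly $2^{n-|R|}$ such sets, giving $2^{n-|R|}/2^{n-|S|}=1/2^{|R|-|S|}$. The only cosmetic differences are that you factor out the scalar $c$ by linearity at the start and phrase the middle step as the identity $m_S(T,p^R)=1$ iff $R\setminus S\subseteq T$, whereas the paper carries $c$ through the computation directly.
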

\begin{proof}
Since $S \subseteq R$, then for any $L \subset$ and any $T \subseteq N\setminus S$, $v(L\cup T) = 0$. Therefore,  
\begin{align}
    I_\BII^v(S) &=\frac{1}{2^{n-|S|}} \sum_{T\subseteq N \setminus S}\sum_{L \subseteq S} (-1)^{|S|-|L|} v(L \cup T)\notag\\
    &= \frac{1}{2^{n-|S|}} \sum_{T\subseteq N \setminus S}v(S \cup T)\label{eqn:primBanzhaf-summand}
\end{align}
Next, if $T\subseteq N\setminus S$ does not contain $R\setminus S$, then $v(S\cup T) = 0$. Therefore, the summand in \eqref{eqn:primBanzhaf-summand} equals
\begin{align*}
    \sum_{T\subseteq N \setminus R}v(R \cup T) = c\times \sum_{T\subseteq N \setminus R}1 = c\times 2^{n - |R|} 
\end{align*}
Plugging this back into \eqref{eqn:primBanzhaf-summand}, we obtain the desired result.
\end{proof}

Next, let us characterize how influence measures satisfying our axioms behave on primitive games. Note that Lemma \ref{lem:for_primitive_classifier} offers a special case of Proposition \ref{prop:primitive} for \emph{any} influence measure, rather than just for \BII.

\begin{restatable}{lemma}{lemmaprimitiveaxioms}\label{lem:for_primitive_classifier}
If $I^v$ satisfies (S), (GE), (L) and (M) then for $v=c\times p^R$, $I^v(R) = c$
\end{restatable}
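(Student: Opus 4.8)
The plan is to prove the statement by induction on the \emph{gap} $d = n - |R|$, i.e. the number of features lying outside $R$. (We only need the case $R \neq \emptyset$: every feature effect game satisfies $v(\emptyset)=0$, so the coefficient of $p^\emptyset$ in the decomposition \eqref{eqn:unique-decomp} vanishes, and $R=\emptyset$ never arises with a nonzero weight.) The base case is $d=0$, i.e. $R=N$, where the Limit Condition applies verbatim: $I^v(N) = m_N(\emptyset,v) = \sum_{L\subseteq N}(-1)^{n-|L|}\,c\,p^N(L) = c$, since $p^N(L)=1$ only for $L=N$.

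For the inductive step, suppose $R\subsetneq N$ and $R\neq\emptyset$, and pick a feature $i\in R$ together with a feature $k\in N\setminus R$. The idea is to contract $i$ and $k$ into a single feature and invoke Generalized 2-Efficiency. First I would check that the reduced game is again a scaled primitive game: setting $R' = (R\setminus i)\cup\{[ik]\}$ over the universe $N' = (N\setminus\{i,k\})\cup\{[ik]\}$, a short case split on whether $[ik]\in S$ shows that $v_{[ik]} = c\,p^{R'}$. Crucially, $|N'| = n-1$ while $|R'| = |R|$, so the gap of $v_{[ik]}$ is $d-1$ (the \emph{gap}, not $|R|$, is what drops), and $R'\neq\emptyset$. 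The inductive hypothesis therefore gives $I^{v_{[ik]}}(R') = c$.

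Now I apply (GE) to the merged pair $i,k$ with $S = R\setminus i$ (which lies in $N\setminus\{i,k\}$, since $k\notin R$): this yields $I^{v_{[ik]}}(R') = I^v(R) + I^v((R\setminus i)\cup k)$. The set $(R\setminus i)\cup k$ contains $k\notin R$, hence $(R\setminus i)\cup k \not\subseteq R$; by Proposition~\ref{prop:primitive} (extended to scalar multiples), $m_{(R\setminus i)\cup k}(T,v)=0$ for all $T\subseteq N\setminus((R\setminus i)\cup k)$, and Lemma~\ref{lem:null_classifier} then forces $I^v((R\setminus i)\cup k)=0$. Substituting, $c = I^v(R) + 0$, so $I^v(R)=c$, closing the induction.

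The main obstacle — essentially the only non-routine point — is the bookkeeping in the contraction step. One must verify that merging an \emph{inside} feature $i$ with an \emph{outside} feature $k$ produces a scaled primitive game whose primitive set $R'$ has the \emph{same} cardinality as $R$ (so that induction on the gap is well-founded and terminates at the Limit Condition), and one must choose $S$ in (GE) so that exactly one of the two resulting coalitions equals the target $R$ while the other lands outside $R$ and is annihilated by Lemma~\ref{lem:null_classifier}. Everything else is a direct appeal to the already-established axioms and technical results.
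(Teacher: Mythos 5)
Your proof is correct and takes essentially the same approach as the paper's: the paper also fixes $i\in R$, merges it with the features of $N\setminus R$ one at a time via (GE), kills each off-support term $I(\cdot\,\cup j_\ell)$ using Proposition~\ref{prop:primitive} (extended to scalar multiples) together with Lemma~\ref{lem:null_classifier}, and concludes with the Limit Condition once the merged support fills the reduced universe. Your induction on the gap $n-|R|$ is simply the recursive packaging of the paper's explicit iterated-merging chain, with the base case $R=N$ playing the role of the paper's final appeal to (L).
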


\begin{proof}
We prove this lemma by inductively removing a feature $k\in N\setminus R$ and using the (GE) property at each step. 
Take any feature $i\in R$ and remove it from $R$ and define $S:=R\setminus \{i\} $. Now for any feature $j_1 \in N\setminus R \neq i$, by (GE) property we can write,
\begin{equation*}
    I^{v_{[ij_1]}}(S \cup [ij_1]) = I^{v}(S \cup i) + I^v(S\cup j_1)
\end{equation*}
Since $S \cup j_1\not \subseteq R$, by Proposition~\ref{prop:primitive}, $m_{S\cup j_1}(T,v) = 0$ for all $T \subseteq N \setminus \{S\cup j_1\}$. 
Therefore by Lemma~\ref{lem:null_classifier}, $I^v(S\cup j_1) = 0$, which yields $I^v(R) =I^{v_{[ij_1]}}(S \cup [ij_1]) $. We next remove $j_2\neq j_1 \in N\setminus R $, and again invoke the (GE) property:
\begin{equation*}
  I^{v_{[ij_1j_2]}}(S \cup [ij_1j_2]) = I^{v_{[ij_1]}}(S \cup [ij_1]) + I^{v_{[ij_1]}}(S\cup j_2)  
\end{equation*}
It is easy to check that $I^{v_{[ij_1]}}(S\cup j_2) = 0$ by Proposition~\ref{prop:primitive} and Lemma~\ref{lem:null_classifier} for the reduced game $v_{[ij_1]}$. 
\[    v_{[ij_1]}(S')=
    \begin{cases}
      c, & \text{if  } S\cup[ij_1] \subseteq S'\\
      0, & \text{else} 
    \end{cases}
\]
$m_{S \cup j_2}(T,v_{[ij_1]}) = 0$ for any $T\subseteq \{N\setminus \{i,j_1\}\cup [ij_1]\} \setminus \{S \cup j_2\} $. This implies that $I^v(R) =I^{v_{[ij_1j_2]}}(S \cup [ij_1j_2])$. By repeating this argument for all $j\in N\setminus R$, we will have $I^v(R) =I^{v_{[N\setminus S]}}(S \cup [N\setminus S])$. We can write the reduced game $v_{[N\setminus S]}$ as   
\[   
    v_{[N\setminus S]}(S')=
    \begin{cases}
      c, & \text{if } S' = S \cup [N\setminus S]\  \\
      0, & \text{else.} 
    \end{cases}
\]
By the Limit (L) property, 
$$I^{v_{[N\setminus S]}}(S \cup [N\setminus S]) = m_{S \cup [N\setminus S]}(\emptyset,v_{[N\setminus S]}) = c,$$ 
which concludes the proof
\end{proof}

\section{Additional Example}\label{app:examples}

 In the following example, we demonstrate that the Shapley interaction index can be misleading in simple situations. We consider the general majority classification function which exhibits pairwise feature interaction. Shapley interaction indices fail to capture these interactions. 
 Moreover, Shapley-Taylor interaction indices fail to capture the sign of pairwise interactions for the same function. 
 \begin{example} \cite{agarwal2019new}
 Consider a classification function whose input space is binary. Let the classification function $f$ be: $f(x_1,\dots,x_n) = 1$ iff $\sum_ix_i \ge k$ and $0$ elsewhere with the baseline vector $\vec x' = \vec 0$. Thus, $v(S,\vec 1, \vec x') = 1 $ iff $|S|\geq k$ and $0$ otherwise. 
 For $k=\frac n 2$, the function coincides with the majority function discussed in the cooperative game theory literature. 
 Clearly, there exists pairwise interaction among features, however, the Shapley interaction value for each pairwise feature is $0$. 
 In contrast, the pairwise Banzhaf interaction index for any feature pair $\{i,j\}$ is $c_k (2k - (n+3))$ where $c_k > 0$. 
 Pairwise interaction is negative when $2k < n+3$. This can be explained by the following argument: the number of winning coalitions containing  $\{i,j\}$ is $\binom{n-2}{k-2}$, and the number of winning coalitions that do not contain $\{i,j\}$ $\binom{n-2}{k}$, which is higher for smaller $k$. 
 This shows that $\{i,j\}$ has more interaction effect for the output $0$. 
 On the other hand, the Shapley-Taylor pairwise interaction index is $\frac{2}{n(n-1)}$, and fails to capture the sign of the interaction index. 
\end{example}
\section{Computational Complexity}\label{sec:computations}
Computing \BII exactly --- even for individual features, where it coincides with the Banzhaf index --- is intractable (see \cite[Chapter 4]{chalkiadakis2011computational} for a general overview); however, due to its probabilistic nature, \BII can be well-approximated via random sampling for individual features \cite{bachrach2008approximating}. The sampling approach is widely used for approximating cooperative game theoretic influence measures for the individual features \cite{datta2016algorithmic,lundberg2017unified}. \newline
To approximate the pairwise interaction for \BII, we can decompose pairwise \BII as following:
\begin{align}
    I^v_{\BII}(ij) &= \frac{1}{2^{n-2}} \sum_{T\subseteq N \setminus ij} (v(T\cup ij) - v(T\cup i) - v(T\cup j) + v(T))\notag\\
    &= I^{v_{[ij]}}_{\BII}([ij])  -  I^{v(N\setminus i)}_{\BII}(j) -  I^{v(N\setminus j)}_{\BII}(i) \label{eqn:BIIdecompsistion}
\end{align}
This shows that there exists an efficient sampling scheme which approximates the pairwise \BII: each term in Equation~\ref{eqn:BIIdecompsistion} can be approximated by polynomially many samples similar to the \cite{datta2016algorithmic,lundberg2017unified}. Moreover, we further note that the approximation algorithms discussed for approximating Shapley interaction index in \citet{lundberg2018consistent} and polynomial algorithms for the tree-based decision model can be implemented for \BII as well. 
\section{Broader Impact}
The recent wave of research on explainable machine learning is motivated by the disparity between the success of machine learning, and its pronounced lack of public accountability. This work takes the next natural step towards a more general theory of model explanations. Our work treats explanation complexity as a \emph{tunable parameter}: we can generate explanations of varying degree of complexity, as is appropriate for different contexts and stakeholders. The authors view this as a \emph{positive} effect on society.

Machine Learning models have been shown to potentially leak information about the training data \cite{nasr2019comprehensive} and there is some recent work that shows how explanations can be used to reconstruct models \cite{milli2018model} or recover user data \cite{shokri2019privacy}. 
There is a growing body of evidence that explanations make models more prone to leak private training data; such leakage could undermine public trust in machine learning models, the very same trust that model explanations try to establish. It is only natural to assume that our high-dimensional model explanations are more vulnerable to adversarial exploitation, as they release more information per query. This is a potential societal \emph{risk}. 
Advances in developing differentially private machine learning models \cite{abadi2016deep} offer some hope that privacy preserving model explanations can be developed. In fact, QII is shown to be differentially private (with respect to the sample set) \cite{datta2016algorithmic}.

Finally, our work offers several {\em mathematical} guarantees of explanation quality, but does not focus on its \emph{actual adoptability}. There is currently little evidence that mathematically justified model explanations, as elegant as they may be, are acceptable by human stakeholders, or actually help humans understand model decisions. 
In fact, recent work suggests that model explanations can result in overconfidence of software developers in their understanding of models \cite{kaur2020interpreting}. We believe that an axiomatic approach, like the one chosen in the paper helps ensure that stakeholders understand the extent to which an explanation can be useful. This makes explanation methods such as ours more \emph{trustworthy}. However, we also believe that before model explanation methods are deployed, the research community needs to clearly specify which tools can be used for which tasks, and that end users understand their limitations. Some parts of the community are already moving towards a more user-centric direction and believe that this can help overcome existing shortcomings~\cite{kim2017Iiterpretability, lipton2016interpretability,wang2019explain}. 
\section{Experimental Evaluation}\label{sec:experiments}
In this section, we experimentally illustrate the need for high dimensional model explanations and the axioms discussed above. 
We consider two-well known benchmark datasets; the \emph{UCI Adult dataset} \cite{adultdata} tries to predict whether an individual's income exceeds \$50,000/year based on census data, and the \emph{IMDB movie review dataset} \cite{maas-EtAl:2011:ACL-HLT2011} tries to predict the positive/negative sentiment of movie reviews based on their word content.

For the Adult dataset we train a 3-level decision three (see Figure~\ref{fig:tree}) which achieves an 84\% test accuracy. We use this easily comprehensible model to show the shortcomings of existing methods and the importance of our axioms. We also train a random forest with 50 estimators and a maximum depth of 10 that achieves a slightly higher accuracy of 86\%\footnote{We use keras library to train decision tree and random forest \url{https://github.com/keras-team/keras}.}.

 \begin{figure*}
    \centering
    \if\compileFigures1
    \tikzset{
        ellC/.style={
            color of colormap={#1},
           fill=.,
        },
}

\begin{tikzpicture}[
    node/.style={%
    draw, rounded rectangle,
    minimum height=5mm,
    },
    leaf/.style={%
    draw, rounded rectangle,
    align=center
    }
  ]
  
  \newdimen\nodeDist
\nodeDist=35mm

    \def\nodeAngle{70};
    \def\nodeAngleTwo{58};
    \def\nodeDistTwo{20mm};
    \def\nodeAngleThree{40};
    \def\nodeDistThree{12mm};
    \node [node] (0) {\small $\MS$};
    \path (0) ++(-90-\nodeAngle:\nodeDist) node [node] (1) {\small $\CG$ $>$ 7073};
    \path (0) ++(-90+\nodeAngle:\nodeDist) node [node] (8) {\small $\EL$ $>$ 12};
    
    \path (1) ++(-90-\nodeAngleTwo:\nodeDistTwo) node [node] (2) {\small \small $\EL$ $>$ 12};
    \path (1) ++(-90+\nodeAngleTwo:\nodeDistTwo) node [node] (5) {\small $\Age$ $>$ 20};
    \path (8) ++(-90-\nodeAngleTwo:\nodeDistTwo) node [node] (9) {\small $\CG$ $>$ 5095};
    \path (8) ++(-90+\nodeAngleTwo:\nodeDistTwo) node [node] (12) {\small $\CG$ $>$ 5095};
    
    \path (2) ++(-90-\nodeAngleThree:\nodeDistThree) node [leaf,ellC=30,text=white] (3) {$0.03$}; 
    \path (2) ++(-90+\nodeAngleThree:\nodeDistThree) node [leaf,ellC=150,text=white] (4) {$0.15$}; 
    \path (5) ++(-90-\nodeAngleThree:\nodeDistThree) node [leaf,ellC=0,text=white] (6) {$0.00$}; 
    \path (5) ++(-90+\nodeAngleThree:\nodeDistThree) node [leaf,ellC=960,text=black] (7) {$0.98$}; 
    \path (9) ++(-90-\nodeAngleThree:\nodeDistThree) node [leaf,ellC=300,text=white] (10) {$0.30$}; 
    \path (9) ++(-90+\nodeAngleThree:\nodeDistThree) node [leaf,ellC=980,text=black] (11) {$0.98$}; 
    \path (12) ++(-90-\nodeAngleThree:\nodeDistThree) node [leaf,ellC=670,text=black] (13) {$0.67$}; 
    \path (12) ++(-90+\nodeAngleThree:\nodeDistThree) node [leaf,ellC=1000,text=black] (14) {$1.00$}; 

    \draw (0) -- (1) node [above left,pos=0.5] {No}(0);
    \draw (0) -- (8) node [above right,pos=0.5] {Yes}(0);
    
    \draw (1) -- (2) node [above left,pos=0.85] {No}(0);
    \draw (1) -- (5) node [above right,pos=0.85] {Yes}(0);
    \draw (8) -- (9) node [above left,pos=0.85] {No}(0);
    \draw (8) -- (12) node [above right,pos=0.85] {Yes}(0);
    
    \draw (2) -- (3) node [left,pos=0.5] {No}(0);
    \draw (2) -- (4) node [right,pos=0.5] {Yes}(0);
    \draw (5) -- (6) node [left,pos=0.5] {No}(0);
    \draw (5) -- (7) node [right,pos=0.5] {Yes}(0);
    \draw (9) -- (10) node [left,pos=0.5] {No}(0);
    \draw (9) -- (11) node [right,pos=0.5] {Yes}(0);
    \draw (12) -- (13) node [left,pos=0.5] {No}(0);
    \draw (12) -- (14) node [right,pos=0.5] {Yes}(0);
    
\end{tikzpicture}
    \else
    \includegraphics[]{fig/\filename-figure\thefiguerNumber.pdf}
    \stepcounter{figuerNumber}
    \fi
    \caption{A three-level decision tree trained on the adult dataset, achieving 84\% test accuracy.}
    \label{fig:tree}
\end{figure*}

For the IMDB dataset we train the BERT language model\footnote{We use pertained BERT model for the IMDB dataset available at \url{https://github.com/artemisart/bert-sentiment-IMDB}.} \cite{devlin2018bert}, which achieves $89\%$ evaluation accuracy; the BERT-Base model we use has 110M parameters, making it virtually incomprehensible to humans.

Like many other model explanation methods, \BII assumes the existence of a baseline, i.e. a vector having ``typical'' values which can be used as a reference for the point of interest when creating the explanation. In the Adult dataset, we assume the baseline for continuous features to be their median value; categorical features are one-hot encoded, and the baseline is assumed to be zero. For the IMDB dataset, each word is a distinct feature, and the baseline is an empty sentence (i.e. all features set to $0$).
\subsection{Baseline Explanation Frameworks}
We compare \BII with an interaction measure based on the Shapley interaction index (SII) \cite{grabisch1999axiomatic} and a metric that falls into the framework proposed in \cite{datta2016algorithmic} which we call Set-QII. 
For all methods we consider a version using the baseline i.e. the \BII and SII values are the respective game-theoretic measures when ``playing'' the game of setting features to the baseline. Set-QII measures the influence of a (set of) features by measuring the difference of the function value on the point of interest and a single another point where the values of this (set of) feature(s) is set to the baseline. For all three metrics, we consider model explanations capturing single feature influences and pairwise feature interactions. We briefly discuss the computational complexity in the supplementary material~\cite{supplementarymaterial}. 

To further illustrate the importance of measuring interactions we also study the well known \LIME method \cite{ribeiro2016should} which only generates individual feature importance. 

We also consider a naive approach to create feature interactions which we call the \emph{additive expansion}. The additive expansion of a feature-based explanation methods is computing joint influence of feature $i$ and $j$ (interaction among features $i$ and $j$) by simply adding individual influence of features $i$ and $j$, $I(ij) = I(i) + I(j)$. This is a naive way to compute the joint effect of features when only individual feature importance is provided. Via additive expansion we can create a feature-interaction measure from \LIME, as well as ad-hoc feature-interaction measures from the feature-importance versions of Banzhaf, Shapley and QII.   

 \subsection{The Importance of Interactions}

While seemingly obvious, we begin our empirical investigation with a simple question: are there instances where high-dimensional model explanations are needed? In other words, is it possible that feature-wise explanations offer sufficient explanations in practice? Our first experiment indicates that high-dimensional explanations matter. In Figure~\ref{fig:intro_bii}, we visualize interactions for the movie review described in the Introduction (Section~\ref{sec:intro}). 
We compute pairwise interactions using \BII among features for the BERT model. We represent the explanations generated by \BII as a symmetric matrix, where the $i$-th diagonal element represents the influence of feature $i$ in the decision tree and $(i,j)$-th element of the matrix for $i<j$ represents the interaction between $i$ and $j$.     

Comparing Figures~\ref{fig:intro_bii} and~\ref{fig:intro_lime}, it is immediately clear why feature importance offers additional insights. The values generated by \LIME are indicative of a negative review (whereas the review we examine is positive), and look somewhat odd: `not' has a strong positive influence. On the other hand, the interactions make it clear that BERT has picked up on the fact that `not bad' indicates a good review. 
We compare additional \BII feature interactions for the BERT model and the IMDB classification task in the supplementary material~\cite{supplementarymaterial} (Figures 1 to 5). \LIME and \BII mostly agree for the individual feature importance; however, \BII captures non-trivial interactions among pairs of words that are impossible to observe only from feature-based explanation.

\pgfplotsset{colormap={RdBu}{ rgb255=(103,0,31) rgb255=(176,23,42) rgb255=(214,96,77) rgb255=(243,163,128) rgb255=(253,219,199) rgb255=(246,246,246) rgb255=(209,229,240) rgb255=(144,196,221) rgb255=(67,147,195) rgb255=(32,100,170) rgb255=(5,48,97) }}

\begin{figure}
    \centering
    \if\compileFigures1
    \input{figures/heatmapDefinition.tex}
    \def\labels{{"It", "isn't", "the", "greatest", "scifi", "flick", "I've", "every", "seen,", "but", "it", "is", "not", "a", "bad", "movie", }}
\tikzset{cellColor/.style={color of colormap={#1},fill=.}}
\def\width{.6}
\def\twidth{1.5}
\def\height{.6}

\begin{tikzpicture}
\def\values{{
{0.03, 0.00, -0.01, -0.07, 0.02, 0.03, -0.00, 0.01, -0.00, -0.01, -0.05, -0.01, 0.02, 0.01, -0.04, 0.01, },
{0.00, -0.01, -0.01, -0.11, 0.02, -0.01, 0.04, -0.02, -0.01, 0.01, -0.00, -0.03, 0.00, -0.02, -0.05, -0.03, },
{0.00, 0.00, 0.06, -0.06, -0.02, -0.01, -0.02, -0.04, 0.00, -0.01, -0.02, -0.01, -0.01, -0.02, -0.03, -0.03, },
{0.00, 0.00, 0.00, 0.22, 0.01, 0.12, -0.02, -0.06, 0.05, 0.07, -0.02, -0.01, 0.12, -0.03, -0.21, 0.06, },
{0.00, 0.00, 0.00, 0.00, -0.03, 0.01, 0.03, -0.03, 0.00, -0.01, 0.00, -0.01, 0.01, -0.01, 0.01, 0.02, },
{0.00, 0.00, 0.00, 0.00, 0.00, -0.03, 0.02, 0.01, 0.00, -0.01, 0.01, 0.00, -0.03, 0.00, 0.07, 0.03, },
{0.00, 0.00, 0.00, 0.00, 0.00, 0.00, -0.02, -0.03, 0.03, 0.00, -0.00, -0.01, 0.01, 0.00, -0.03, -0.03, },
{0.00, 0.00, 0.00, 0.00, 0.00, 0.00, 0.00, 0.06, 0.01, 0.01, -0.03, -0.02, 0.00, -0.01, -0.03, -0.01, },
{0.00, 0.00, 0.00, 0.00, 0.00, 0.00, 0.00, 0.00, -0.05, 0.04, 0.05, 0.06, 0.03, 0.00, -0.02, 0.00, },
{0.00, 0.00, 0.00, 0.00, 0.00, 0.00, 0.00, 0.00, 0.00, -0.02, -0.02, 0.02, 0.02, 0.01, 0.02, 0.00, },
{0.00, 0.00, 0.00, 0.00, 0.00, 0.00, 0.00, 0.00, 0.00, 0.00, 0.00, 0.03, 0.03, 0.00, -0.04, -0.00, },
{0.00, 0.00, 0.00, 0.00, 0.00, 0.00, 0.00, 0.00, 0.00, 0.00, 0.00, -0.02, 0.01, -0.01, -0.00, 0.00, },
{0.00, 0.00, 0.00, 0.00, 0.00, 0.00, 0.00, 0.00, 0.00, 0.00, 0.00, 0.00, 0.29, 0.02, 1.00, -0.03, },
{0.00, 0.00, 0.00, 0.00, 0.00, 0.00, 0.00, 0.00, 0.00, 0.00, 0.00, 0.00, 0.00, -0.01, -0.04, 0.06, },
{0.00, 0.00, 0.00, 0.00, 0.00, 0.00, 0.00, 0.00, 0.00, 0.00, 0.00, 0.00, 0.00, 0.00, -0.24, 0.09, },
{0.00, 0.00, 0.00, 0.00, 0.00, 0.00, 0.00, 0.00, 0.00, 0.00, 0.00, 0.00, 0.00, 0.00, 0.00, -0.08, },
}}
\Heatmap{15}{\small}
\end{tikzpicture}\\
\begin{tikzpicture}
\begin{axis}[
    hide axis,
    scale only axis,
    height=0pt,
    width=0pt,
    colorbar horizontal,
    point meta min =-1,
    point meta max =1,
    colorbar style={
        width=6cm,
        xtick={-1,0,1},
        xticklabels = {\small Negative interaction, \small No interaction, \small Positive interaction}
    }]
    \addplot [draw=none] coordinates {(0,0) (1,1)};
\end{axis}
\end{tikzpicture}
    \else
    \includegraphics[]{fig/\filename-figure\thefiguerNumber.pdf}
    \stepcounter{figuerNumber}
    \includegraphics[]{fig/\filename-figure\thefiguerNumber.pdf}
    \stepcounter{figuerNumber}
    \fi
    \caption{Feature importance and interactions for the movie review given in the introduction generated via \BII. \BII is able to pick up on the strong positive interaction between ``not'' and ``bad'' that leads to a positive prediction for this review. Note that the individual importance of ``not'' and ``bad'' is rather weak - their synergy is assigned importance.}
    \label{fig:intro_bii}
\end{figure}

\begin{figure}
	\centering
	\if\compileFigures1
    \begin{tikzpicture}
  \begin{axis}[width=\columnwidth,
    xbar,
    axis y line*=left,
	axis x line*=bottom,
    height=5cm,
    enlarge x limits=0.5,
    xlabel={Influence},
    symbolic y coords={every, isn't, bad, greatest, not},
    ytick=data,
    nodes near coords, nodes near coords align={horizontal},
    ]
\draw (axis cs:0,every) -- (axis cs:0,not);
\addplot[cellColor=539,forget plot,text  =black] coordinates {(0.21812132157699332,every)};
\addplot[cellColor=442,forget plot,text  =black] coordinates {(-0.3136755150315046,isn't)};
\addplot[cellColor=369,forget plot,text  =black] coordinates {(-0.7123636056530072,bad)};
\addplot[cellColor=679,forget plot,text  =black] coordinates {(0.9834961592054411,greatest)};
\addplot[cellColor=1000,text  =black] coordinates {(2.7398088625416555,not)};

  \end{axis}
\end{tikzpicture}
    \else
    \includegraphics[]{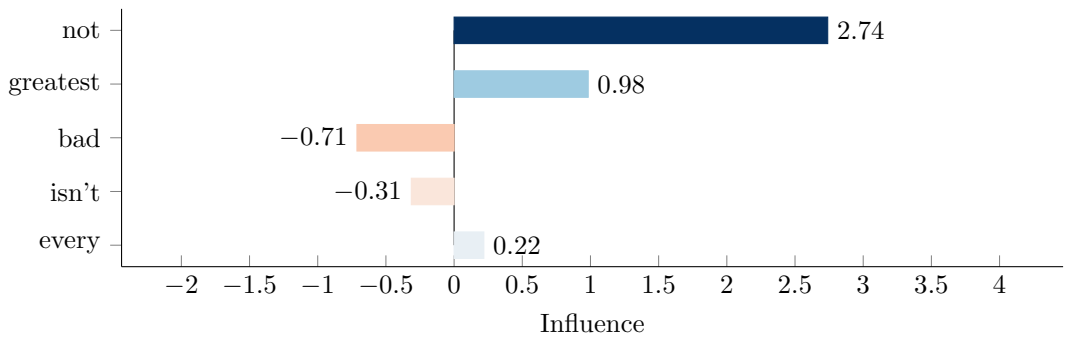}
    \stepcounter{figuerNumber}
    \fi
    \caption{\LIME feature importance for the movie review given in Section \ref{sec:intro}. The measure assigns significant weight to the word ``not'', but little weight to ``bad''.}
    \label{fig:intro_lime}
\end{figure}

In Figure~\ref{fig:treeExplanations_2}, we plot the explanation generated by \BII and \LIME for individuals with $\MS=$yes, $\CG\in\{5095,7073\}$ and $\EL < 12$ from the adult dataset for the simple tree classifier (see Figure~\ref{fig:tree}). \LIME suggests that $\CG$ is the most important feature and the other two features have minor importance. However, for any coalition of features (activated features are set to their POI value, and the others to their baseline value: $\MS = \texttt{No}$ and $\CG=0$), changing only $\MS$ or $\CG$ to their POI value does not significantly change the prediction value; but, changing both of them concurrently to their POI value results in a significant increase in prediction probability (see Table~\ref{tab:importance_of}). $\BII$ successfully captures these insights in Figure~\ref{fig:treeExplanations_2}, and suggests that the high positive interaction between $\CG$ and $\MS$ is responsible for the positive label.
When considering \emph{feature interaction}, there is high positive interaction between \CG and \MS, an insight that linear explanations do not capture, as seen in Table~\ref{tab:importance_of}. 
 
Figure~\ref{fig:treeExplanations} summarizes the explanations generated by \BII, \SII, \LIME and \QII for a married woman with a $\CG$ (CG) of 5178 with $\EL = 13$ in the adult test data for the simple tree classifier (Figure~\ref{fig:tree}).  Column (a) is the additive expansion of the feature-importance versions of these measures and Column (b) shows the methods that explicitly consider interactions.  
Both \BII and \SII indicate a negative synergy between $\EL$ and $\CG$, whereas all methods indicate positive individual importance. Hence, additive expansions fail to capture negative interactions, even for relatively simple models.

\begin{figure}
	\centering
	\pgfplotsset{colormap={RdBu}{ rgb255=(103,0,31) rgb255=(176,23,42) rgb255=(214,96,77) rgb255=(243,163,128) rgb255=(253,219,199) rgb255=(246,246,246) rgb255=(209,229,240) rgb255=(144,196,221) rgb255=(67,147,195) rgb255=(32,100,170) rgb255=(5,48,97) }}
\def\labels{{"CG","EL","MS"}}
\tikzset{cellColor/.style={color of colormap={#1},fill=.}}
\tikzset{cell/.style={minimum width=1cm,minimum height=1cm},
         hcell/.style={minimum width=1cm,minimum height=0.6cm},
         vcell/.style={minimum width=1cm,minimum height=1cm},
         ccell/.style={minimum width=1cm,minimum height=0.6cm},}

\newcommand{\HeatmapNum}[0]{
\pgfkeys{/pgf/number format/.cd,fixed,precision=2}
\node[ccell] (c-1-1) {};
\foreach \i [count=\iold from -1] in {0,1,2}{
        \node[cell,below=0cm of c\iold\iold] (c\i\iold) {};
    \pgfmathparse{\labels[\i]}\let\label\pgfmathresult
    \node[vcell,below=0cm of c\iold-1] (c\i-1) {\small \label};
    \draw (c\i-1.east) -- ([xshift=-3pt]c\i-1.east);
    \node[hcell,right=0cm of c-1\iold] (c-1\i) {\small \label};
    \draw (c-1\i.south) -- ([yshift=3pt]c-1\i.south);
    \foreach \j [count=\jold from \iold] in {\i,...,2}{
        \pgfmathparse{\values[\i][\j]}\let\value\pgfmathresult
        \pgfmathparse{int((\value+1)/(2*1)*1000)}\let\bkvalue\pgfmathresult
        \ifnum \bkvalue < 750
            \ifnum \bkvalue > 250
                \node[cell,cellColor=\bkvalue,right=0cm of c\i\jold,text=black] (c\i\j) {\small \pgfmathprintnumber{\value}};
            \else
                \node[cell,cellColor=\bkvalue,right=0cm of c\i\jold,text=white] (c\i\j) {\small \pgfmathprintnumber{\value}};
            \fi
        \else
            \node[cell,cellColor=\bkvalue,right=0cm of c\i\jold,text=white] (c\i\j) {\small \pgfmathprintnumber{\value}};
        \fi
    }
    }
}

\begin{tabular}{m{4cm} m{4cm}}
  \multicolumn{1}{c}{\BII}  &  \multicolumn{1}{c}{\LIME}  \\

\if\compileFigures1
\begin{tikzpicture}
\def\values{{
{0.35, 0.00, 0.71, },
{0.00, 0.00, 0.00, },
{0.00, 0.00, 0.65, },
}}
\HeatmapNum
\end{tikzpicture}
\begin{tikzpicture}
	\begin{axis}[
		hide axis,
		scale only axis,
		height=0pt,
		width=0pt,
		colorbar horizontal,
		point meta min =-1,
		point meta max =1,
		colorbar style={
			width=2.5cm,
			xtick={-1,1},
			xticklabels = {Negative , Positive }
		}]
		\addplot [draw=none] coordinates {(0,0) (1,1)};
	\end{axis}
\end{tikzpicture}
\else
\includegraphics[]{fig/\filename-figure\thefiguerNumber.pdf}
\stepcounter{figuerNumber}
\includegraphics[]{fig/\filename-figure\thefiguerNumber.pdf}
\stepcounter{figuerNumber}
\fi
&
\if\compileFigures1
\begin{tikzpicture}
	\begin{axis}[width=.5\columnwidth,height=4cm,
		xbar,
		axis y line*=left,
		axis x line*=bottom,
		height=5cm,
		enlarge x limits=0.5,
		enlarge y limits=0.5,
		xlabel={Influence},
		symbolic y coords={EL, MS, CG},
		ytick=data,
		nodes near coords, nodes near coords align={horizontal},
		]
		\draw (axis cs:0,EL) -- (axis cs:0,CG);
		\addplot[cellColor=815,forget plot,text  =black] coordinates {(0.63,CG)};
		\addplot[cellColor=655,forget plot,text  =black] coordinates {(0.31,MS)};
		\addplot[cellColor=505,forget plot,text  =black] coordinates {(0.09,EL)};	
	\end{axis}
\end{tikzpicture}
\else
\includegraphics[]{fig/\filename-figure\thefiguerNumber.pdf}
\stepcounter{figuerNumber}
\fi
     
\end{tabular}
\\
	\caption{The explanation generated by \BII and \LIME for another point in the dataset.\LIME fails to capture interactions.}
	\label{fig:treeExplanations_2}
\end{figure}

\begin{figure*}
    \centering
    \pgfplotsset{colormap={RdBu}{ rgb255=(103,0,31) rgb255=(176,23,42) rgb255=(214,96,77) rgb255=(243,163,128) rgb255=(253,219,199) rgb255=(246,246,246) rgb255=(209,229,240) rgb255=(144,196,221) rgb255=(67,147,195) rgb255=(32,100,170) rgb255=(5,48,97) }}
\def\labels{{"CG","EL","MS"}}
\tikzset{cellColor/.style={color of colormap={#1},fill=.}}
\tikzset{cell/.style={minimum width=0.95cm,minimum height=0.95cm},
         hcell/.style={minimum width=0.95cm,minimum height=0.6cm},
         vcell/.style={minimum width=0.95cm,minimum height=0.95cm},
         ccell/.style={minimum width=0.95cm,minimum height=0.6cm},}
\newcommand{\heatmapTextColor}[1]
{
\ifnum\numexpr#1>500
"white"
\else
"black"
\fi
}

\newcommand{\DHeatmap}[0]{
\pgfkeys{/pgf/number format/.cd,fixed,precision=2}
\node[ccell] (c-1-1)  {};
\foreach \i [count=\iold from -1] in {0,1,2}{
    \pgfmathparse{\labels[\i]}\let\label\pgfmathresult
    \node[cell,below=0cm of c\iold\iold] (c\i\iold) {};
    \node[vcell,below=0cm of c\iold-1] (c\i-1) {\label};
    \draw (c\i-1.east) -- ([xshift=-3pt]c\i-1.east);
    \node[hcell,right=0cm of c-1\iold] (c-1\i) {\label};
    \draw (c-1\i.south) -- ([yshift=3pt]c-1\i.south);
    \foreach \j [count=\jold from \iold] in {\i}{
        \pgfmathparse{\values[\i][\j]}\let\value\pgfmathresult
        \pgfmathparse{int((\value+1)/(2*1)*1000)}\let\bkvalue\pgfmathresult
        \ifnum \bkvalue < 750
            \ifnum \bkvalue > 250
                \node[cell,cellColor=\bkvalue,right=0cm of c\i\jold,text=black] (c\i\j) {\pgfmathprintnumber{\value}};
            \else
                \node[cell,cellColor=\bkvalue,right=0cm of c\i\jold,text=white] (c\i\j) {\pgfmathprintnumber{\value}};
            \fi
        \else
            \node[cell,cellColor=\bkvalue,right=0cm of c\i\jold,text=white] (c\i\j) {\pgfmathprintnumber{\value}};
        \fi
    }
    }
}

\newcommand{\Heatmap}[0]{
\pgfkeys{/pgf/number format/.cd,fixed,precision=2}
\node[ccell] (c-1-1) {};
\foreach \i [count=\iold from -1] in {0,1,2}{
        \node[cell,below=0cm of c\iold\iold] (c\i\iold) {};
    \pgfmathparse{\labels[\i]}\let\label\pgfmathresult
    \node[vcell,below=0cm of c\iold-1] (c\i-1) {\label};
    \draw (c\i-1.east) -- ([xshift=-3pt]c\i-1.east);
    \node[hcell,right=0cm of c-1\iold] (c-1\i) {\label};
    \draw (c-1\i.south) -- ([yshift=3pt]c-1\i.south);
    \foreach \j [count=\jold from \iold] in {\i,...,2}{
        \pgfmathparse{\values[\i][\j]}\let\value\pgfmathresult
        \pgfmathparse{int((\value+1)/(2*1)*1000)}\let\bkvalue\pgfmathresult
        \ifnum \bkvalue < 750
            \ifnum \bkvalue > 250
                \node[,cell,cellColor=\bkvalue,right=0cm of c\i\jold,text=black] (c\i\j) {\pgfmathprintnumber{\value}};
            \else
                \node[cell,cellColor=\bkvalue,right=0cm of c\i\jold,text=white] (c\i\j) {\pgfmathprintnumber{\value}};
            \fi
        \else
            \node[cell,cellColor=\bkvalue,right=0cm of c\i\jold,text=white] (c\i\j) {\pgfmathprintnumber{\value}};
        \fi
    }
    }
}

\begin{tabular}{l >{\centering\arraybackslash}m{3.6cm}>{\centering\arraybackslash}m{3.6cm}>{\centering\arraybackslash}m{3.6cm}}
\toprule
 &\textbf{(a) Additive expansion}&\textbf{(b) Actual interactions}&\textbf{(c) After monotone shift} \\ 
 \midrule
\textbf{Banzhaf} &
\if\compileFigures1
\begin{tikzpicture}
\def\values{{
{0.28, 0.46, 1.00, },
{0.00, 0.18, 0.90, },
{0.00, 0.00, 0.72, },
}}
\Heatmap
\end{tikzpicture}
\else
\includegraphics[]{fig/\filename-figure\thefiguerNumber.pdf}
\stepcounter{figuerNumber}
\fi
&
\if\compileFigures1
\begin{tikzpicture}
\def\values{{
{0.28, -0.20, 0.56, },
{0.00, 0.18, 0.07, },
{0.00, 0.00, 0.72, },
}}
\Heatmap
\end{tikzpicture}
\else
\includegraphics[]{fig/\filename-figure\thefiguerNumber.pdf}
\stepcounter{figuerNumber}
\fi
&
\if\compileFigures1
\begin{tikzpicture}
\def\values{{
{0.44, 0.12, 0.88, },
{0.00, 0.02, -0.2, },
{0.00, 0.00, 0.57, },
}}
\Heatmap
\end{tikzpicture}
\else
\includegraphics[]{fig/\filename-figure\thefiguerNumber.pdf}
\stepcounter{figuerNumber}
\fi
\\
\textbf{ Shapley }  &
\if\compileFigures1
\begin{tikzpicture}
\def\values{{
{0.22, 0.40, 1.00, },
{0.00, 0.18, 0.96, },
{0.00, 0.00, 0.78, },
}}
\Heatmap
\end{tikzpicture}
\else
\includegraphics[]{fig/\filename-figure\thefiguerNumber.pdf}
\stepcounter{figuerNumber}
\fi
&
\if\compileFigures1
\begin{tikzpicture}
\def\values{{
{0.22, -0.2, 0.63, },
{0.00, 0.18, 0.10, },
{0.00, 0.00, 0.78, },
}}
\Heatmap
\end{tikzpicture}
\else
\includegraphics[]{fig/\filename-figure\thefiguerNumber.pdf}
\stepcounter{figuerNumber}
\fi
&
\if\compileFigures1
\begin{tikzpicture}
\def\values{{
{0.43, 0.13, 0.90, },
{0.00, 0.04, -0.3, },
{0.00, 0.00, 0.60, },
}}
\Heatmap
\end{tikzpicture}
\else
\includegraphics[]{fig/\filename-figure\thefiguerNumber.pdf}
\stepcounter{figuerNumber}
\fi
\\
\textbf{Qii }&
\if\compileFigures1
\begin{tikzpicture}
\def\values{{
{0.28, 0.29, 1.00, },
{0.00, 0.01, 0.74, },
{0.00, 0.00, 0.72, },
}}
\Heatmap
\end{tikzpicture}
\else
\includegraphics[]{fig/\filename-figure\thefiguerNumber.pdf}
\stepcounter{figuerNumber}
\fi
&
\if\compileFigures1
\begin{tikzpicture}
\def\values{{
{0.28, 0.60, 0.72, },
{0.00, 0.01, 0.83, },
{0.00, 0.00, 0.72, },
}}
\Heatmap
\end{tikzpicture}
\else
\includegraphics[]{fig/\filename-figure\thefiguerNumber.pdf}
\stepcounter{figuerNumber}
\fi
&
\if\compileFigures1
\begin{tikzpicture}
\def\values{{
{0.76, 0.60, 0.72, },
{0.00, 0.01, 0.83, },
{0.00, 0.00, 0.72, },
}}
\Heatmap
\end{tikzpicture}
\else
\includegraphics[]{fig/\filename-figure\thefiguerNumber.pdf}
\stepcounter{figuerNumber}
\fi
\\
%
%
& \multicolumn{3}{c}{
\if\compileFigures1
\begin{tikzpicture}
\begin{axis}[
    hide axis,
    scale only axis,
    height=0pt,
    width=0pt,
    colorbar horizontal,
    point meta min =-1,
    point meta max =1,
    colorbar style={
        width=7cm,
        xtick={-1,0,1},
        xticklabels = {Negative interaction, No interaction, Positive interaction}
    }]
    \addplot [draw=none] coordinates {(0,0) (1,1)};
\end{axis}
\end{tikzpicture}
\else
\includegraphics[]{fig/\filename-figure\thefiguerNumber.pdf}
\stepcounter{figuerNumber}
\fi}\\
\bottomrule
\end{tabular}
    \caption{\small The explanations for a married woman with a $\CG$ (CG) of 5178 with $\EL = 13$ (i.e. Masters) in the adult test data for the above tree. (a) Linear explanations highlight the importance of $\MS$ (MS) and $\CG$ (CG), they fail to show how these to features interact. (b) The positive interaction between $\CG$ (CG) and $\MS$ (MS) highlights that both features a needed for a positive outcome. (c) After the tree gets modified to strengthen the interaction, the explanation proposed in Set-QII does not change, failing monotonicity. }
    \label{fig:treeExplanations}
\end{figure*}




\begin{table*}
	\small
    \caption{\small This example shows the joint importance of the interactions between $\CG$ and $\MS$ in the coalition of $\{\Age,\EL\}$, and the monotonicity property in the tree $T$ (Figure~\ref{fig:tree}). The prediction probability becomes $1.00$ when $\MS$ and $\CG$ both join the coalition $\{\Age,\EL\}$; however, when the $\MS$ or $\CG$ join individually in the coalition $\{\Age, \EL\}$, the prediction probability becomes $0.67$ and $0.15$ respectively. This shows that the features have synergistic effect, which is not captured by linear explanations (this holds for other coalitions, not just $\{\Age,\EL\}$). 
    }
	\setlength{\tabcolsep}{0pt}
    \centering
    \label{tab:importance_of}
    \if\compileFigures1
    \tikzexternaldisable
    \fi
    \begin{tabular}{ l c c c c c c}
        \toprule
         \textbf{Features} &\textbf{POI} & \textbf{Base} & \multicolumn{4}{c}{\textbf{Coalition}}\\
          & & &\textbf{\small \{A,EL\}} & \textbf{\small \{A,EL,MS\}} &\textbf{\small \{A,EL,CG\}} & \textbf{\small \{A,EL,MS,CG\}}  \\
         \midrule 
         \wc{$\MS$ (MS)} & \wc{Yes} &    \bc{No} &      \bc{No} & \wc{Yes} &     \bc{No} & \wc{Yes}\\
        \wc{$ \CG$ (CG)} & \wc{6021} &     \bc{0} &      \bc{0} &     \bc{0} & \wc{6021} & \wc{6021}\\
        \wc{$ \Age$ (A) }& \wc{32} &      \bc{37} & \wc{32} & \wc{32} &\wc{32} & \wc{32}\\
        \wc{$ \EL$ (EL)} & \wc{13} &    \bc{10} & \wc{13} & \wc{13} & \wc{13} & \wc{13}\\
        \midrule
        \textbf{Prediction} &1.00 & 0.03 & 0.15 & 0.67 & 0.15 & 1.00\\
        \bottomrule
    \end{tabular}
\end{table*}

\subsection{Monotonicity Matters}\label{sec:monotonicity}
Having established the importance of measuring feature synergy, let us turn to evaluating the importance of our proposed axioms. To do so, we consider a modified version of the tree depicted in Figure~\ref{fig:tree}. The output in the second-to-last leaf is changed from 0.67 to 0.01, i.e. married people with high $\EL$ but low $\CG$ are now classified as not having a high paying job. This change does not affect the prediction of the two points discussed above. 
However, this increases the interaction between $\CG$ and $\MS$: having a lower $\CG$ (or being unmarried) both lead to a negative outcome. More formally, we denote the cooperative game induced by the original tree by $f$ and the one induced by the modified tree by $f'$. Under $f'$, the joint marginal gains increase: 
\begin{align*}
  m_{\text{MS,CG}}( &\{ \Age,\EL \},f') >\\
  &m_{\text{MS,CG}}(\{\Age,\EL\},f),  
\end{align*}
and $m_{\text{MS,CG}}(T,f') = m_{\text{MS,CG}}(T,f)$ for any other coalition $T$. According to the \emph{monotonicity} property, the interaction index value among $\{\Age,\EL\}$ should be higher under $f'$ than $f$.  
This change is reflected in Shapley interactions and \BII as both measures satisfy monotonicity (see Figure~\ref{fig:treeExplanations}, column (b) for actual interaction and column (c) for interaction after the modification). Set-QII, however, is unchanged; it fails to satisfy monotonicity, and even for this small example cannot distinguish between the two trees. Similarly, the interaction between $\CG$ and $\EN$ should also increase which is again reflected by \SII and \BII; Set-QII fails to show this (see Figure~\ref{fig:treeExplanations}).


\begin{table*}
\small
    \centering
    \caption{\small Failure to satisfy generalized 2-efficiency leads to counterintuitive interactions after features are merged. We merge $\EL$ and $\EN$ (which are actually identical); Shapley interaction values with $\MS$ (left) and $\CG$ (right) before and after the features are merged tend to overestimate the expected merged value (as compared to the expected value under generalised 2-efficiency).}
    \label{tab:featureMerging}
    \setlength{\tabcolsep}{0pt}
    \if\compileFigures1
    \tikzexternaldisable
    \fi
    \begin{tabular}{l c c c c c p{4pt}  c c c c c }
        \toprule
        & \multicolumn{5}{c}{Interaction with \MS}& & \multicolumn{5}{c}{Interaction with \CG}\\
        \cmidrule{2-6} \cmidrule{8-12}
                            & Pt. 1 & Pt. 2 & Pt. 3 & Pt. 4 & Pt. 5  && Pt. 1 & Pt. 2 & Pt. 3 & Pt. 4 & Pt. 5\\
    \midrule
    $\EL $ (EL)   &-.068 &-.032&-.028&  -.066 &  -.044& & -.017 &0 & .016	 & .015	 & -.002  \\
    $\EN $ (EN)                   &.05 &.123&.118&  .041 &  .17& & .016 &-.007 & -.009	 & -.006	 & -.011  \\
    \whitecell{$[\text{EL} ;\text{EN}]$} &\colorcell{-.077}{\MSscale}   &\colorcell{.161}{\MSscale}  &\colorcell{.154}{\MSscale}  &  \colorcell{ -.077}{\MSscale} &  \colorcell{.23}{\MSscale}&& \colorcell{.013}{\CGscale} &\colorcell{-.013}{\CGscale} & \colorcell{.017}{\CGscale}	 & \colorcell{.019}{\CGscale}	 & \colorcell{.014}{\CGscale}  \\
    \whitecell{Expected [EL;EN]}    &\colorcell{-.018}{\MSscale}   &\colorcell{.091}{\MSscale}  &\colorcell{.090}{\MSscale}  &  \colorcell{-.025}{\MSscale} &  \colorcell{.126}{\MSscale}&& \colorcell{-.001}{\CGscale} &\colorcell{-.007}{\CGscale} & \colorcell{.007}{\CGscale}	 & \colorcell{.009}{\CGscale}	 & \colorcell{.009}{\CGscale}  \\
    \midrule
     \whitecell{\textbf{Difference}}&\colorcell{-.059}{\MSscale}   &\colorcell{.070}{\MSscale}  &\colorcell{.064}{\MSscale}  &  \colorcell{-.052}{\MSscale} &  \colorcell{-.103}{\MSscale}&& \colorcell{.014}{\CGscale} &\colorcell{-.006}{\CGscale} & \colorcell{.010}{\CGscale}	 & \colorcell{.010}{\CGscale}	 & \colorcell{.005}{\CGscale}  \\
    \bottomrule
    \end{tabular}
\if\compileFigures1
\tikzexternalenable
\begin{tikzpicture}
\begin{axis}[
    hide axis,
    scale only axis,
    height=0pt,
    width=0pt,
    colorbar horizontal,
    point meta min =-1,
    point meta max =1,
    colorbar style={
        width=7cm,
        xtick={-1,0,1},
        xticklabels = {Negative interaction, No interaction, Positive interaction}
    }]
    \addplot [draw=none] coordinates {(0,0) (1,1)};
\end{axis}
\end{tikzpicture}
\else
\includegraphics[]{fig/\filename-figure\thefiguerNumber.pdf}
\stepcounter{figuerNumber}
\fi
\end{table*}

\subsection{The Effects of 2-Efficiency}
Table~\ref{tab:featureMerging} highlights how failure to satisfy generalized 2-efficiency can lead to counterintuitive interactions when features are merged. 
We calculated the \SII values for $\EL$ and $\EN$ for a random forest with $\MS$ (left) and $\CG$ (right) on some sampled points; we recalculated them after merging the features to $[\EL;\EN]$. These features can be naturally merged as they are in fact identical: $\EN$ is just a numerical representation of $\EL$. 

In some instances, the interaction values of $\EL$ and $\EN$ with other features differ in sign (i.e. one has positive interaction and another has a negative interaction). In this case, under 2-efficiency, the interaction value of the merged feature $[\EL;\EN]$ should lie somewhere between the two.
In Table~\ref{tab:featureMerging}, we show some of the points where the Shapley interaction of $\MS$ and $\CG$ with merged feature deviates from the expected interactions. For example, for Point 2, the \SII of $\EL$ with $\MS$ is $-0.32$ and $\EN$ with $\MS$ is $0.123$, therefore the interaction after merging $\EN$ and $\EL$ should be less than $0.123$; however, the \SII interaction value between $\MS$ and the merged feature is $0.161$ --- more than $0.123$! In other words, \SII placed far too much influence on two identical merged features. We show additional examples in Table~\ref{tab:featureMerging}.
Since \BII satisfies generalized 2-efficiency, influence is preserved under merging and thus matches the expected outcome.
\newpage 
\section{Additional Example Explanations for BERT}\label{sec:BERTex}
\begin{figure}[!htb]
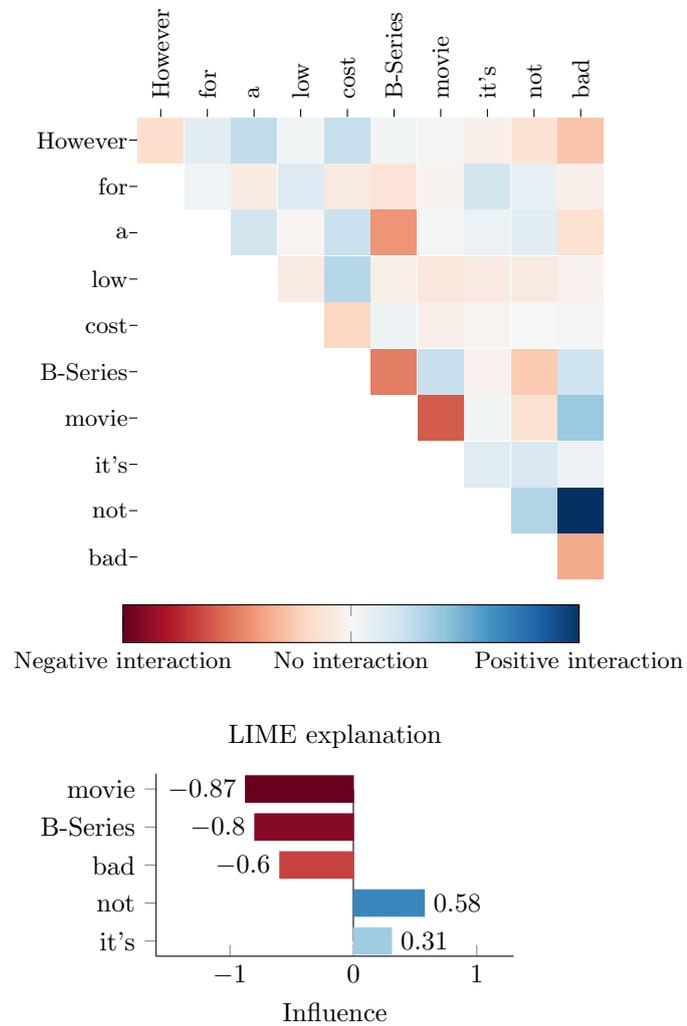

    \centering
    \begin{subfigure}[b]{0.5\textwidth}
    \if\compileFigures1
    \input{figures/heatmapDefinition.tex}
    \begin{tikzpicture}
\def\values{{
{-0.17, 0.11, 0.25, 0.03, 0.24, 0.03, -0.01, -0.05, -0.15, -0.28, },
{0.00, 0.04, -0.07, 0.13, -0.09, -0.13, -0.03, 0.19, 0.08, -0.05, },
{0.00, 0.00, 0.19, -0.02, 0.22, -0.44, 0.02, 0.06, 0.11, -0.16, },
{0.00, 0.00, 0.00, -0.08, 0.29, -0.06, -0.11, -0.08, -0.09, -0.04, },
{0.00, 0.00, 0.00, 0.00, -0.21, 0.05, -0.05, -0.02, 0.00, 0.01, },
{0.00, 0.00, 0.00, 0.00, 0.00, -0.51, 0.23, -0.04, -0.26, 0.21, },
{0.00, 0.00, 0.00, 0.00, 0.00, 0.00, -0.61, 0.03, -0.15, 0.37, },
{0.00, 0.00, 0.00, 0.00, 0.00, 0.00, 0.00, 0.12, 0.16, 0.06, },
{0.00, 0.00, 0.00, 0.00, 0.00, 0.00, 0.00, 0.00, 0.30, 1.00, },
{0.00, 0.00, 0.00, 0.00, 0.00, 0.00, 0.00, 0.00, 0.00, -0.37, },
}}
\def\labels{{"However", "for", "a", "low", "cost", "B-Series", "movie", "it's", "not", "bad",  }}
\def\width{.6}
\def\twidth{1.5}
\def\height{.6}
\Heatmap{9}{\small}
\end{tikzpicture} \\
\begin{tikzpicture}
\begin{axis}[
    hide axis,
    scale only axis,
    height=0pt,
    width=0pt,
    colorbar horizontal,
    point meta min =-1,
    point meta max =1,
    colorbar style={
        width=6cm,
        xtick={-1,0,1},
        xticklabels = {\small Negative interaction, \small No interaction, \small Positive interaction}
    }]
    \addplot [draw=none] coordinates {(0,0) (1,1)};
\end{axis}
\end{tikzpicture}
    \else
    \includegraphics[]{fig/\filename-figure\thefiguerNumber.pdf}
    \stepcounter{figuerNumber}
    \includegraphics[]{fig/\filename-figure\thefiguerNumber.pdf}
    \stepcounter{figuerNumber}
    \fi
	\end{subfigure}\\
    \begin{subfigure}[b]{0.45\textwidth}
    	\if\compileFigures1
    	\begin{tikzpicture}
  \begin{axis}[width=\linewidth,
    xbar=1mm,
    axis y line*=left,
	axis x line*=bottom,
    height=4cm,
    enlarge x limits=0.5,
    xlabel={Influence},
    symbolic y coords={it's, not, bad, B-Series, movie, },
    ytick=data,
    nodes near coords, nodes near coords align={horizontal},
    title=LIME explanation
    ]
\draw (axis cs:0,it's) -- (axis cs:0,movie);
\addplot[cellColor=676,forget plot,text  =black] coordinates {(0.30891096367354764,it's)};
\addplot[cellColor=829,forget plot,text  =black] coordinates {(0.575090991342599,not)};
\addplot[cellColor=159,forget plot,text  =black] coordinates {(-0.5956265562633294,bad)};
\addplot[cellColor=42,forget plot,text  =black] coordinates {(-0.7986898933042558,B-Series)};
\addplot[cellColor=0,text  =black] coordinates {(-0.8737076201666895,movie)};
  \end{axis}
\end{tikzpicture}
    	\else
    	\includegraphics[]{fig/\filename-figure\thefiguerNumber.pdf}
    	\stepcounter{figuerNumber}
    	\fi
    \end{subfigure}
    \caption{ While most individual features have negative influence for both \BII and \LIME, \BII is able to pick up on the strong positive interaction between "not" and "bad" that leads to a positive prediction of this review.}
    \label{fig:bii_lime_283}
\end{figure}
\newpage

\begin{figure}[!htb]
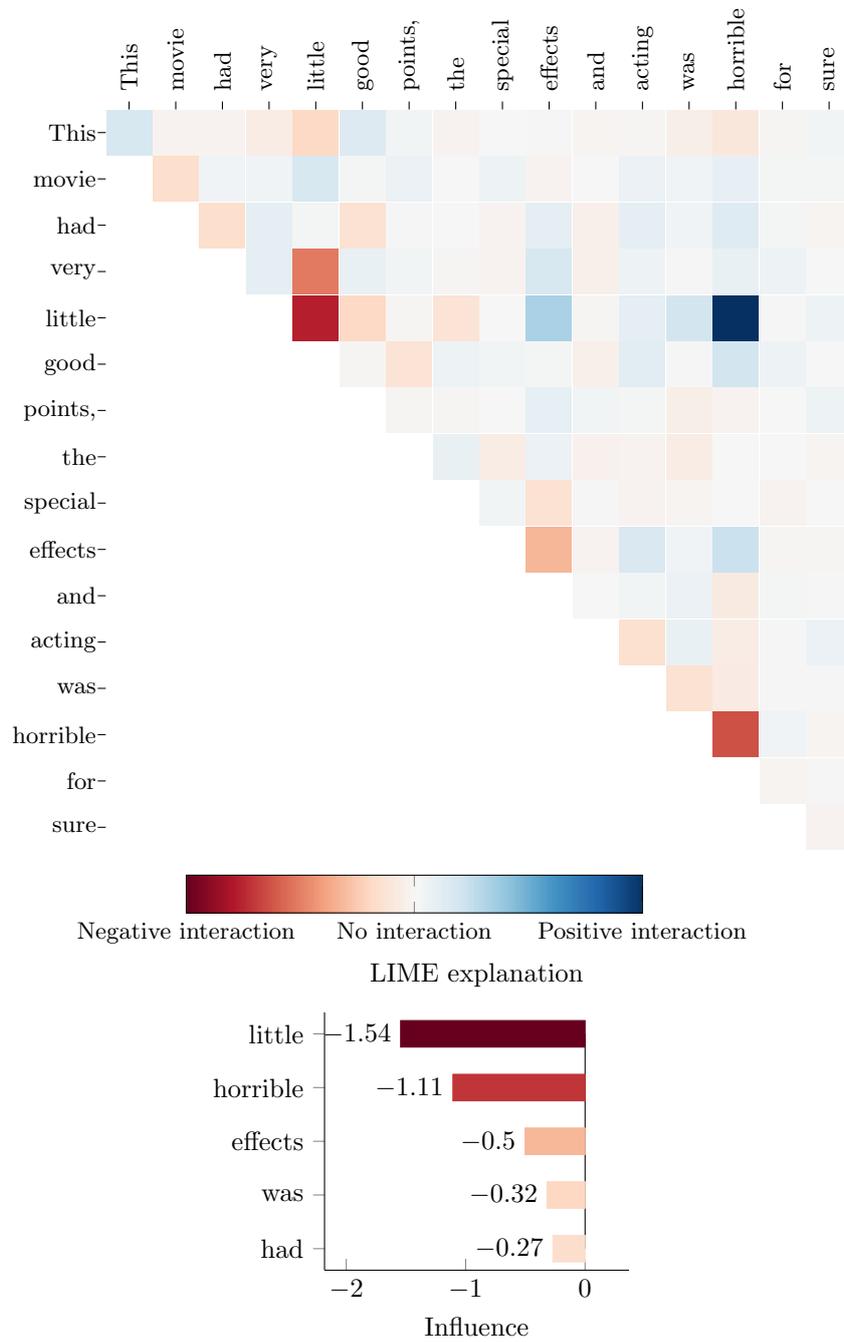

	\centering
	\begin{subfigure}[b]{\textwidth}
		\centering
		\if\compileFigures1
		\input{figures/heatmapDefinition.tex}
		\begin{tikzpicture}
\def\labels{{"This", "movie", "had", "very", "little", "good", "points,", "the", "special", "effects", "and", "acting", "was", "horrible", "for", "sure", }}
\def\values{{
{0.17, -0.03, -0.03, -0.07, -0.20, 0.14, 0.03, -0.03, 0.00, 0.01, -0.02, -0.01, -0.06, -0.11, -0.01, 0.03, },
{0.00, -0.17, 0.04, 0.04, 0.17, 0.02, 0.06, 0.00, 0.05, -0.03, 0.00, 0.06, 0.04, 0.08, 0.02, 0.02, },
{0.00, 0.00, -0.17, 0.10, 0.02, -0.15, 0.01, -0.00, -0.03, 0.10, -0.05, 0.10, 0.04, 0.13, 0.02, -0.02, },
{0.00, 0.00, 0.00, 0.10, -0.52, 0.07, 0.03, -0.01, -0.03, 0.17, -0.05, 0.05, 0.01, 0.07, 0.05, -0.00, },
{0.00, 0.00, 0.00, 0.00, -0.78, -0.20, -0.01, -0.14, -0.00, 0.32, -0.01, 0.10, 0.19, 1.00, 0.01, 0.05, },
{0.00, 0.00, 0.00, 0.00, 0.00, -0.01, -0.14, 0.05, 0.03, 0.02, -0.05, 0.11, 0.01, 0.19, 0.05, 0.00, },
{0.00, 0.00, 0.00, 0.00, 0.00, 0.00, -0.01, -0.01, -0.00, 0.09, 0.03, 0.02, -0.06, -0.03, -0.00, 0.05, },
{0.00, 0.00, 0.00, 0.00, 0.00, 0.00, 0.00, 0.07, -0.07, 0.06, -0.04, -0.03, -0.07, 0.00, -0.00, -0.02, },
{0.00, 0.00, 0.00, 0.00, 0.00, 0.00, 0.00, 0.00, 0.03, -0.15, 0.01, -0.03, -0.02, -0.00, -0.03, -0.00, },
{0.00, 0.00, 0.00, 0.00, 0.00, 0.00, 0.00, 0.00, 0.00, -0.33, -0.03, 0.16, 0.04, 0.22, -0.01, -0.01, },
{0.00, 0.00, 0.00, 0.00, 0.00, 0.00, 0.00, 0.00, 0.00, 0.00, 0.00, 0.03, 0.06, -0.09, 0.02, 0.01, },
{0.00, 0.00, 0.00, 0.00, 0.00, 0.00, 0.00, 0.00, 0.00, 0.00, 0.00, -0.16, 0.07, -0.07, 0.01, 0.06, },
{0.00, 0.00, 0.00, 0.00, 0.00, 0.00, 0.00, 0.00, 0.00, 0.00, 0.00, 0.00, -0.15, -0.08, 0.01, 0.01, },
{0.00, 0.00, 0.00, 0.00, 0.00, 0.00, 0.00, 0.00, 0.00, 0.00, 0.00, 0.00, 0.00, -0.64, 0.04, -0.02, },
{0.00, 0.00, 0.00, 0.00, 0.00, 0.00, 0.00, 0.00, 0.00, 0.00, 0.00, 0.00, 0.00, 0.00, -0.02, 0.01, },
{0.0, 0.00, 0.00, 0.00, 0.00, 0.00, 0.00, 0.00, 0.00, 0.00, 0.00, 0.00, 0.00, 0.00, 0.00, -0.03, },
}}

\def\width{.6}
\def\twidth{1.4}
\def\height{.6}
\Heatmap{15}{\small}
\end{tikzpicture}\\
\begin{tikzpicture}
\begin{axis}[
    hide axis,
    scale only axis,
    height=0pt,
    width=0pt,
    colorbar horizontal,
    point meta min =-1,
    point meta max =1,
    colorbar style={
        width=6cm,
        xtick={-1,0,1},
        xticklabels = {\small Negative interaction, \small No interaction, \small Positive interaction}
    }]
    \addplot [draw=none] coordinates {(0,0) (1,1)};
\end{axis}
\end{tikzpicture}
		\else
		\includegraphics[]{fig/\filename-figure\thefiguerNumber.pdf}
		\stepcounter{figuerNumber}
		\includegraphics[]{fig/\filename-figure\thefiguerNumber.pdf}
		\stepcounter{figuerNumber}
		\fi
	\end{subfigure}\\
	\begin{subfigure}[b]{0.4\textwidth}
		\if\compileFigures1
		\begin{tikzpicture}
  \begin{axis}[width=\linewidth,
    xbar,
    axis y line*=left,
	axis x line*=bottom,
    height=5cm,
    enlarge x limits=0.5,
    xlabel={Influence},
    symbolic y coords={had, was, effects, horrible, little,   },
    ytick=data,
    nodes near coords, nodes near coords align={horizontal},
    title=LIME explanation
    ]
\draw (axis cs:0,little) -- (axis cs:0,had);
\addplot[cellColor=412,forget plot,text  =black] coordinates {(-0.26943514077122427,had)};
\addplot[cellColor=396,forget plot,text  =black] coordinates {(-0.3188024908706511,was)};
\addplot[cellColor=337,forget plot,text  =black] coordinates {(-0.5019748598691895,effects)};
\addplot[cellColor=141,forget plot,text  =black] coordinates {(-1.106545264946068,horrible)};
\addplot[cellColor=0,text  =black] coordinates {(-1.5445578475025283,little)};
  \end{axis}
\end{tikzpicture}
		\else
		\includegraphics[]{fig/\filename-figure\thefiguerNumber.pdf}
		\stepcounter{figuerNumber}
		\fi
	\end{subfigure}
	\caption{ While most individual features have negative influence suggested by \BII and \LIME, \BII is able to pick up on the strong positive interaction between "little" and "horrible" that leads to less confidence on the negative prediction of the review.}
	\label{fig:bii_lime_322}
\end{figure}
\newpage
\begin{figure}[!htb]
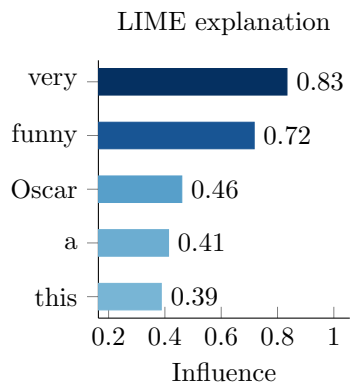

	\centering
	\begin{subfigure}[b]{0.6\textwidth}
		\centering
		\if\compileFigures1
		\input{figures/heatmapDefinition.tex}
		\begin{tikzpicture}
\def\labels{{"It", "may", "not", "be", "Oscar", "material,", "however", "this", "was", "a", "very", "funny", "film", }}
\def\values{{
{0.32, -0.03, 0.23, -0.01, 0.05, 0.09, 0.11, -0.07, -0.06, -0.11, -0.15, -0.16, -0.14, },
{0.00, 0.18, 0.28, 0.04, -0.08, 0.10, 0.10, -0.04, -0.06, -0.07, 0.01, -0.04, -0.04, },
{0.00, 0.00, -0.15, 0.08, 0.38, 0.94, 1.00, 0.28, 0.35, 0.07, -0.21, -0.42, 0.01, },
{0.00, 0.00, 0.00, 0.06, -0.19, 0.01, 0.12, 0.03, -0.00, 0.03, 0.03, 0.03, 0.02, },
{0.00, 0.00, 0.00, 0.00, 0.27, -0.39, 0.23, 0.12, 0.01, -0.12, -0.02, 0.29, 0.14, },
{0.00, 0.00, 0.00, 0.00, 0.00, 0.05, -0.33, 0.19, -0.25, 0.10, 0.22, 0.86, 0.22, },
{0.00, 0.00, 0.00, 0.00, 0.00, 0.00, -0.19, -0.02, -0.24, 0.16, 0.21, 0.41, 0.15, },
{0.00, 0.00, 0.00, 0.00, 0.00, 0.00, 0.00, 0.15, -0.05, -0.08, 0.02, 0.23, -0.06, },
{0.00, 0.00, 0.00, 0.00, 0.00, 0.00, 0.00, 0.00, 0.07, 0.03, 0.23, 0.06, -0.08, },
{0.00, 0.00, 0.00, 0.00, 0.00, 0.00, 0.00, 0.00, 0.00, 0.32, 0.06, -0.22, 0.02, },
{0.00, 0.00, 0.00, 0.00, 0.00, 0.00, 0.00, 0.00, 0.00, 0.00, 0.51, -0.52, 0.19, },
{0.00, 0.00, 0.00, 0.00, 0.00, 0.00, 0.00, 0.00, 0.00, 0.00, 0.00, 0.49, -0.00, },
{0.00, 0.00, 0.00, 0.00, 0.00, 0.00, 0.00, 0.00, 0.00, 0.00, 0.00, 0.00, 0.11, },
}}
\def\width{.6}
\def\twidth{1.6}
\def\height{.6}

\Heatmap{12}{\small}
\end{tikzpicture}\\
\begin{tikzpicture}
\begin{axis}[
    hide axis,
    scale only axis,
    height=0pt,
    width=0pt,
    colorbar horizontal,
    point meta min =-1,
    point meta max =1,
    colorbar style={
        width=6cm,
        xtick={-1,0,1},
        xticklabels = {\small Negative interaction, \small No interaction, \small Positive interaction}
    }]
    \addplot [draw=none] coordinates {(0,0) (1,1)};
\end{axis}
\end{tikzpicture}
		\else
		\includegraphics[]{fig/\filename-figure\thefiguerNumber.pdf}
		\stepcounter{figuerNumber}
		\includegraphics[]{fig/\filename-figure\thefiguerNumber.pdf}
		\stepcounter{figuerNumber}
		\fi
	\end{subfigure}\\
	\begin{subfigure}[b]{0.35\textwidth}
		\if\compileFigures1
		\begin{tikzpicture}
  \begin{axis}[width=\linewidth,
    xbar,
    axis y line*=left,
	axis x line*=bottom,
    height=5cm,
    enlarge x limits=0.5,
    xlabel={Influence},
    symbolic y coords={this, a, Oscar, funny, very,  },
    ytick=data,
    nodes near coords, nodes near coords align={horizontal},
    title=LIME explanation
    ]
\draw (axis cs:0,very) -- (axis cs:0,this);
\addplot[cellColor=732,forget plot,text  =black] coordinates {(0.3870368897445558,this)};
\addplot[cellColor=747,forget plot,text  =black] coordinates {(0.41287485945638214,a)};
\addplot[cellColor=775,forget plot,text  =black] coordinates {(0.45994187477166015,Oscar)};
\addplot[cellColor=930,forget plot,text  =black] coordinates {(0.7170381278691298,funny)};
\addplot[cellColor=1000,text  =black] coordinates {(0.8335028181840112,very)};

  \end{axis}
\end{tikzpicture}
		\else
		\includegraphics[]{fig/\filename-figure\thefiguerNumber.pdf}
		\stepcounter{figuerNumber}
		\fi
	\end{subfigure}
	\caption{While most individual features have positive influence for \BII and \LIME both, \BII is able to pick up some interesting interactions among words. For example the strong negative interaction between "not" and "funny" and "very" and "funny"  that leads to less confidence on the negative prediction of the review.}
	\label{fig:bii_lime_865}
\end{figure}
\newpage
\begin{figure}[!htb]
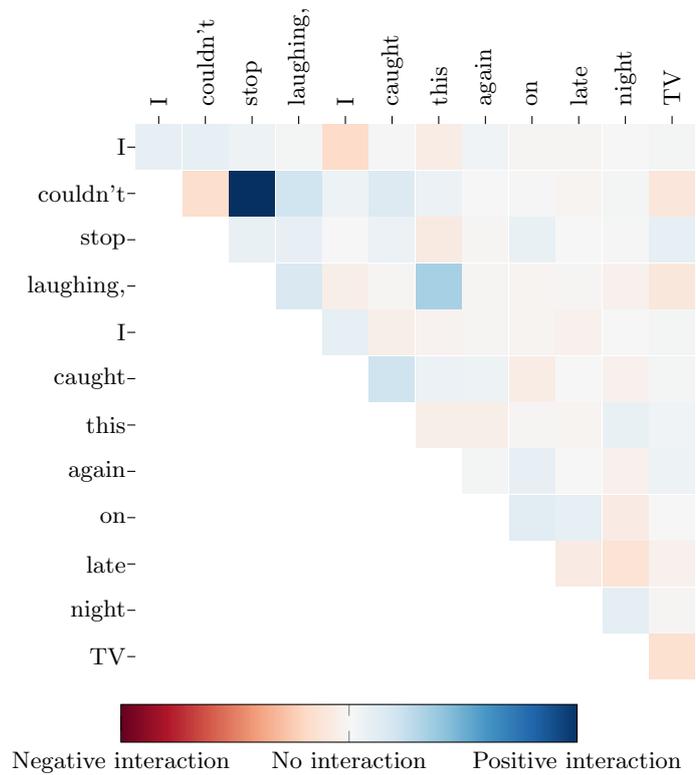
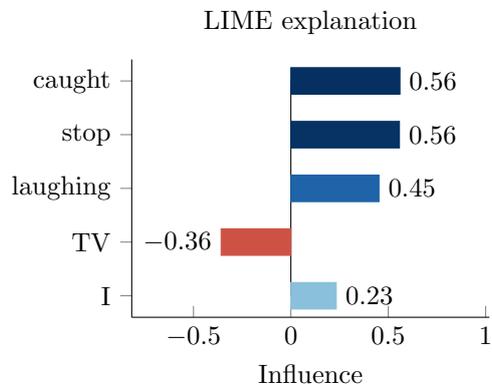

	\centering
	\begin{subfigure}[b]{0.5\textwidth}
		\if\compileFigures1
		\input{figures/heatmapDefinition.tex}
		\begin{tikzpicture}
\def\width{.6}
\def\twidth{1.5}
\def\height{.6}
\def\labels{{"I", "couldn't", "stop", "laughing,", "I", "caught", "this", "again", "on", "late", "night", "TV",  }}
\def\values{{
{0.08, 0.09, 0.05, 0.02, -0.19, 0.01, -0.07, 0.04, -0.01, -0.01, 0.00, 0.02, },
{0.00, -0.17, 1.00, 0.20, 0.05, 0.14, 0.06, -0.00, 0.01, -0.02, 0.02, -0.12, },
{0.00, 0.00, 0.07, 0.08, 0.00, 0.06, -0.09, -0.01, 0.07, -0.00, 0.01, 0.09, },
{0.00, 0.00, 0.00, 0.16, -0.06, -0.01, 0.33, -0.01, -0.02, -0.01, -0.04, -0.11, },
{0.00, 0.00, 0.00, 0.00, 0.09, -0.06, -0.03, -0.01, -0.02, -0.04, 0.00, 0.02, },
{0.00, 0.00, 0.00, 0.00, 0.00, 0.21, 0.06, 0.05, -0.07, -0.00, -0.04, 0.02, },
{0.00, 0.00, 0.00, 0.00, 0.00, 0.00, -0.06, -0.06, -0.01, -0.02, 0.07, 0.04, },
{0.00, 0.00, 0.00, 0.00, 0.00, 0.00, 0.00, 0.02, 0.08, -0.00, -0.04, 0.05, },
{0.00, 0.00, 0.00, 0.00, 0.00, 0.00, 0.00, 0.00, 0.11, 0.09, -0.08, -0.00, },
{0.00, 0.00, 0.00, 0.00, 0.00, 0.00, 0.00, 0.00, 0.00, -0.08, -0.14, -0.04, },
{0.00, 0.00, 0.00, 0.00, 0.00, 0.00, 0.00, 0.00, 0.00, 0.00, 0.10, -0.01, },
{0.00, 0.00, 0.00, 0.00, 0.00, 0.00, 0.00, 0.00, 0.00, 0.00, 0.00, -0.16, },
}}

\Heatmap{11}{\small}
\end{tikzpicture}\\
\begin{tikzpicture}
\begin{axis}[
    hide axis,
    scale only axis,
    height=0pt,
    width=0pt,
    colorbar horizontal,
    point meta min =-1,
    point meta max =1,
    colorbar style={
        width=6cm,
        xtick={-1,0,1},
        xticklabels = {\small Negative interaction, \small No interaction, \small Positive interaction}
    }]
    \addplot [draw=none] coordinates {(0,0) (1,1)};
\end{axis}
\end{tikzpicture}
		\else
		\includegraphics[]{fig/\filename-figure\thefiguerNumber.pdf}
		\stepcounter{figuerNumber}
		\includegraphics[]{fig/\filename-figure\thefiguerNumber.pdf}
		\stepcounter{figuerNumber}
		\fi
	\end{subfigure}\\
	\begin{subfigure}[b]{0.45\textwidth}
		\if\compileFigures1
		\begin{tikzpicture}
  \begin{axis}[width=\linewidth,
    xbar,
    axis y line*=left,
	axis x line*=bottom,
    height=5cm,
    enlarge x limits=0.5,
    xlabel={Influence},
    symbolic y coords={I, TV, laughing, stop, caught,  },
    ytick=data,
    nodes near coords, nodes near coords align={horizontal},
    title=LIME explanation
    ]
\draw (axis cs:0,caught) -- (axis cs:0,I);
\addplot[cellColor=708,forget plot,text  =black] coordinates {(0.2330334964232226,I)};
\addplot[cellColor=180,forget plot,text  =black] coordinates {(-0.3581529069140019,TV)};
\addplot[cellColor=903,forget plot,text  =black] coordinates {(0.45233863472701896,laughing)};
\addplot[cellColor=997,forget plot,text  =black] coordinates {(0.5575027293182119,stop)};
\addplot[cellColor=1000,text  =black] coordinates {(0.5599453495425041,caught)};

  \end{axis}
\end{tikzpicture}
		\else
		\includegraphics[]{fig/\filename-figure\thefiguerNumber.pdf}
		\stepcounter{figuerNumber}
		\fi
	\end{subfigure}
	\caption{\BII assigns almost negligible importance to individual word and picks up on the strong positive interaction between "couldn't" and "stop" that leads to a positive prediction of this review. However, \LIME or any other feature based model explanation fails to capture such interactions.}
	\label{fig:bii_lime_877}
\end{figure}
\newpage
\begin{figure}[!htb]
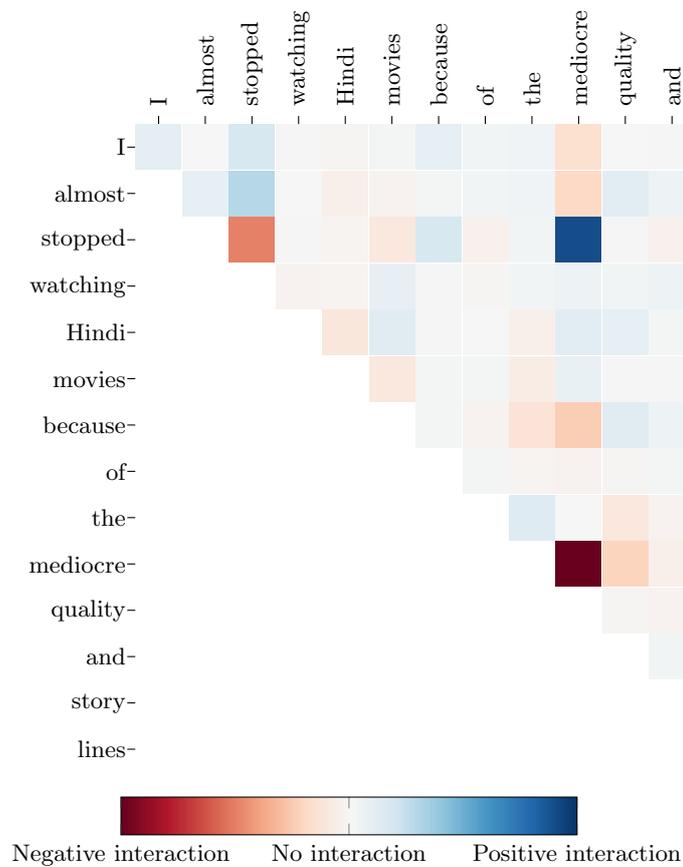
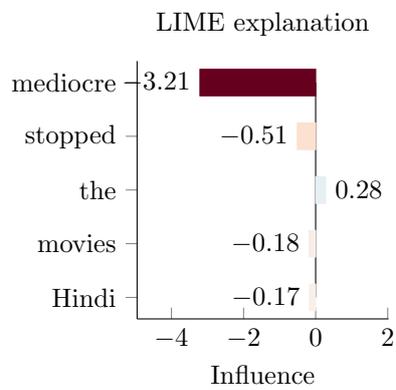

	\centering
	\begin{subfigure}[b]{0.6\textwidth}
		\if\compileFigures1
		\input{figures/heatmapDefinition.tex}
		\begin{tikzpicture}
\def\labels{{"I", "almost", "stopped", "watching", "Hindi", "movies", "because", "of", "the", "mediocre", "quality", "and", "story", "lines", }}
\def\values{{
{0.10, -0.00, 0.17, 0.01, -0.01, 0.02, 0.09, 0.03, 0.04, -0.16, -0.00, 0.01, 0.02, 0.01, },
{0.00, 0.09, 0.29, -0.00, -0.05, -0.03, 0.02, 0.03, 0.04, -0.20, 0.11, 0.05, 0.04, 0.04, },
{0.00, 0.00, -0.50, 0.01, -0.02, -0.10, 0.17, -0.04, 0.03, 0.89, 0.01, -0.04, 0.02, -0.03, },
{0.00, 0.00, 0.00, -0.03, -0.02, 0.08, 0.01, -0.01, 0.03, 0.05, 0.03, 0.05, 0.02, 0.03, },
{0.00, 0.00, 0.00, 0.00, -0.11, 0.12, 0.01, 0.00, -0.05, 0.11, 0.09, 0.02, 0.06, 0.02, },
{0.00, 0.00, 0.00, 0.00, 0.00, -0.10, 0.02, 0.02, -0.07, 0.07, 0.01, 0.01, 0.07, 0.00, },
{0.00, 0.00, 0.00, 0.00, 0.00, 0.00, 0.02, -0.03, -0.13, -0.25, 0.12, 0.05, 0.06, -0.03, },
{0.00, 0.00, 0.00, 0.00, 0.00, 0.00, 0.00, 0.02, -0.02, -0.03, -0.01, 0.02, 0.01, 0.00, },
{0.00, 0.00, 0.00, 0.00, 0.00, 0.00, 0.00, 0.00, 0.13, -0.00, -0.10, -0.03, -0.00, -0.01, },
{0.00, 0.00, 0.00, 0.00, 0.00, 0.00, 0.00, 0.00, 0.00, -1.00, -0.22, -0.05, -0.12, -0.01, },
{0.00, 0.00, 0.00, 0.00, 0.00, 0.00, 0.00, 0.00, 0.00, 0.00, -0.01, -0.03, 0.11, 0.03, },
{0.00, 0.00, 0.00, 0.00, 0.00, 0.00, 0.00, 0.00, 0.00, 0.00, 0.00, 0.03, 0.01, -0.02, },
{0.00, 0.00, 0.00, 0.00, 0.00, 0.00, 0.00, 0.00, 0.00, 0.00, 0.00, 0.00, 0.01, -0.11, },
{0.00, 0.00, 0.00, 0.00, 0.00, 0.00, 0.00, 0.00, 0.00, 0.00, 0.00, 0.00, 0.00, 0.00, },
}}
\def\width{.6}
\def\twidth{1.5}
\def\height{.6}
\Heatmap{13}{\small}
\end{tikzpicture}\\
\begin{tikzpicture}
\begin{axis}[
    hide axis,
    scale only axis,
    height=0pt,
    width=0pt,
    colorbar horizontal,
    point meta min =-1,
    point meta max =1,
    colorbar style={
        width=6cm,
        xtick={-1,0,1},
        xticklabels = {\small Negative interaction, \small No interaction, \small Positive interaction}
    }]
    \addplot [draw=none] coordinates {(0,0) (1,1)};
\end{axis}
\end{tikzpicture}
		\else
		\includegraphics[]{fig/\filename-figure\thefiguerNumber.pdf}
		\stepcounter{figuerNumber}
		\includegraphics[]{fig/\filename-figure\thefiguerNumber.pdf}
		\stepcounter{figuerNumber}
		\fi
	\end{subfigure}\\
	\begin{subfigure}[b]{0.35\textwidth}
		\if\compileFigures1
		\begin{tikzpicture}
  \begin{axis}[width=\linewidth,
    xbar,
    axis y line*=left,
	axis x line*=bottom,
    height=5cm,
    enlarge x limits=0.5,
    xlabel={Influence},
    symbolic y coords={Hindi, movies, the, stopped, mediocre,  },
    ytick=data,
    nodes near coords, nodes near coords align={horizontal},
    title=LIME explanation
    ]
\draw (axis cs:0,mediocre) -- (axis cs:0,Hindi);
\addplot[cellColor=473,forget plot,text  =black] coordinates {(-0.1726235292822736,Hindi)};
\addplot[cellColor=471,forget plot,text  =black] coordinates {(-0.18487898936242464,movies)};
\addplot[cellColor=543,forget plot,text  =black] coordinates {(0.2770612571089667,the)};
\addplot[cellColor=420,forget plot,text  =black] coordinates {(-0.5090384541568818,stopped)};
\addplot[cellColor=0,text  =black] coordinates {(-3.2104617305739906,mediocre)};

  \end{axis}
\end{tikzpicture}
		\else
		\includegraphics[]{fig/\filename-figure\thefiguerNumber.pdf}
		\stepcounter{figuerNumber}
		\fi
	\end{subfigure}
	\caption{While \BII and \LIME agrees on the feature importance level and suggests that "stopped" and "mediocre" has high negative influence. However, \BII points out that only one of the words "stopped" and "mediocre" has more negative influnce and they both together has positive interactions that leads to less confidence on the negative prediction of the review.}
	\label{fig:bii_lime_76}
\end{figure}

\newpage
\newpage
\section*{Contact}
\begin{contact}
	Neel Patel\\
	University of Southern California\\
	Los Angeles, USA\\
	\email{neelbpat@usc.edu}
\end{contact}

\begin{contact}
	Martin Strobel\\
	National University of Singapore\\
	Singapore, Singapore\\
	\email{mstrobel@comp.nus.edu.sg}
\end{contact}

\begin{contact}
	Yair Zick\\
	University of Massachusetts, Amherst\\
	Amherst, USA\\
	\email{yzick@umass.edu}
\end{contact}

\end{document}